\newcommand{\cX}{\mathcal{X}}
\newcommand{\cM}{\mathcal{M}}
\newcommand{\cA}{\mathcal{A}}
\newcommand{\cZ}{\mathcal{Z}}
\newcommand{\cT}{\mathcal{T}}
\newcommand{\cL}{\mathcal{L}}
\newcommand{\bR}{\mathbb{R}}
\DeclareMathOperator*{\expect}{{\huge \mathbb{E}}}
\newtheorem{lem}{Lemma}
\newtheorem{thm}{Theorem}
\DeclareMathOperator*{\argmin}{arg\,min}
\DeclareMathOperator*{\argmax}{arg\,max}
\newcommand{\eqnref}[1]{(\ref{eqn:#1})}
\def \cfo {\textsc{c51}}
\def \dqn {\textsc{dqn}}
\def \td {\textsc{td}($0$)}
\def \qrtd {\textsc{qrtd}}
\def \qrdqn {\textsc{qr-dqn}}
\def \cTpi {\cT^\pi}
\def \dip {\bar d_p}
\def \qcZ {\cZ_{Q}}
\newcommand{\citet}[1]{\citeauthor{#1} (\citeyear{#1})}
\begin{document}
%

%
\title{Distributional Reinforcement Learning with Quantile Regression}
\author{Will Dabney\\ DeepMind \And Mark Rowland\\ University of Cambridge\thanks{Contributed during an internship at DeepMind.} \And Marc G. Bellemare\\ Google Brain \And R\'emi Munos\\ DeepMind}

\nocopyright

\maketitle
\begin{abstract}
In reinforcement learning an agent interacts with the environment by taking actions and observing the next state and reward. When sampled probabilistically, these state transitions, rewards, and actions can all induce randomness in the observed long-term return. Traditionally, reinforcement learning algorithms average over this randomness to estimate the value function. In this paper, we build on recent work advocating a distributional approach to reinforcement learning in which the distribution over returns is modeled explicitly instead of only estimating the mean. That is, we examine methods of learning the \textit{value distribution} instead of the value function. We give results that close a number of gaps between the theoretical and algorithmic results given by \citet{c51}. First, we extend existing results to the approximate distribution setting. Second, we present a novel distributional reinforcement learning algorithm consistent with our theoretical formulation. Finally, we evaluate this new algorithm on the Atari 2600 games, observing that it significantly outperforms many of the recent improvements on $\dqn$, including the related distributional algorithm $\cfo$.
\end{abstract}

\section{Introduction}

In reinforcement learning, the \emph{value} of an action $a$ in state $s$ describes the expected return, or discounted sum of rewards, obtained from beginning in that state, choosing action $a$, and subsequently following a prescribed policy. Because knowing this value for the \emph{optimal policy} is sufficient to act optimally, it is the object modelled by classic value-based methods such as SARSA \cite{rummery94online} and Q-Learning \cite{watkins1992q}, which use Bellman's equation \cite{bellman57dynamic} to efficiently reason about value.

Recently, \citet{c51} showed that the distribution of the random returns, whose expectation constitutes the aforementioned value, can be described by the distributional analogue of Bellman's equation, echoing previous results in risk-sensitive reinforcement learning \cite{heger1994consideration,morimura10parametric,chow2015risk}. In this previous work, however, the authors argued for the usefulness in modeling this \emph{value distribution} in and of itself. Their claim was asserted by exhibiting a distributional reinforcement learning algorithm, $\cfo$, which achieved state-of-the-art on the suite of benchmark Atari 2600 games \cite{bellemare13arcade}.

One of the theoretical contributions of the $\cfo$ work was a proof that the distributional Bellman operator is a contraction in a maximal form of the Wasserstein metric between probability distributions. In this context, the Wasserstein metric is particularly interesting because it does not suffer from disjoint-support issues \cite{wgan} which arise when performing Bellman updates. Unfortunately, this result does not directly lead to a practical algorithm: as noted by the authors, and further developed by \citet{bellemare17cramer}, the Wasserstein metric, viewed as a loss, cannot generally be minimized using stochastic gradient methods. 

This negative result left open the question as to whether it is possible to devise an online distributional reinforcement learning algorithm which takes advantage of the contraction result. Instead, the $\cfo$ algorithm first performs a heuristic projection step, followed by the minimization of a KL divergence between projected Bellman update and prediction. The work therefore leaves a theory-practice gap in our understanding of distributional reinforcement learning, which makes it difficult to explain the good performance of $\cfo$. Thus, the existence of a distributional algorithm that operates end-to-end on the Wasserstein metric remains an open question.

In this paper, we answer this question affirmatively. By appealing to the theory of quantile regression \cite{qrbook}, we show that there exists an algorithm, applicable in a stochastic approximation setting, which can perform distributional reinforcement learning over the Wasserstein metric. Our method relies on the following techniques:
\begin{itemize}
    \item We ``transpose'' the parametrization from $\cfo$: whereas the former uses $N$ fixed locations for its approximation distribution and adjusts their probabilities, we assign fixed, uniform probabilities to $N$ adjustable locations;
    \item We show that \emph{quantile regression} may be used to stochastically adjust the distributions' locations so as to minimize the Wasserstein distance to a target distribution.
    \item We formally prove contraction mapping results for our overall algorithm, and use these results to conclude that our method performs distributional RL end-to-end under the Wasserstein metric, as desired.
\end{itemize}

The main interest of the original distributional algorithm was its state-of-the-art performance, despite still acting by maximizing expectations. One might naturally expect that a direct minimization of the Wasserstein metric, rather than its heuristic approximation, may yield even better results. We derive the Q-Learning analogue for our method ($\qrdqn$), apply it to the same suite of Atari 2600 games, and find that it achieves even better performance. By using a smoothed version of quantile regression, \emph{Huber quantile regression}, we gain an impressive $33\%$ median score increment over the already state-of-the-art $\cfo$.

\section{Distributional RL}

We model the agent-environment interactions by a Markov decision process (MDP) ($\mathcal{X}, \mathcal{A}, R, P, \gamma$) \cite{puterman94markov}, with $\mathcal{X}$ and $\mathcal{A}$ the state and action spaces, $R$ the random variable reward function, $P(x' | x, a)$ the probability of transitioning from state $x$ to state $x'$ after taking action $a$, and $\gamma \in [0, 1)$ the discount factor. A policy $\pi(\cdot | x)$ maps each state $x \in \mathcal{X}$ to a distribution over $\mathcal{A}$.

For a fixed policy $\pi$, the \textit{return}, $Z^\pi = \sum_{t = 0}^\infty \gamma^t R_t$, is a random variable representing the sum of discounted rewards observed along one trajectory of states while following $\pi$. Standard RL algorithms estimate the expected value of $Z^\pi$, the \textit{value function}, 
\begin{equation}
    V^\pi(x) := \mathbb{E} \left[ Z^\pi(x) \right] = \mathbb{E} \left[ \sum_{t=0}^\infty \gamma^t R(x_t, a_t)\ |\ x_0 = x\right].
\end{equation}
Similarly, many RL algorithms estimate the action-value function,
\begin{eqnarray}
    Q^\pi(x, a) := \mathbb{E} \left[ Z^\pi(x, a) \right] = \mathbb{E} \left[ \sum_{t=0}^\infty \gamma^t R(x_t, a_t) \right],\\
    \nonumber x_t \sim P(\cdot | x_{t-1}, a_{t-1}), a_t \sim \pi(\cdot | x_t),x_0=x, a_0=a.
\end{eqnarray}

The $\epsilon$-greedy policy on $Q^\pi$ chooses actions uniformly at random with probability $\epsilon$ and otherwise according to $\argmax_a Q^\pi(x, a)$. 

In distributional RL the distribution over returns (i.e. the probability law of $Z^\pi$), plays the central role and replaces the value function. We will refer to the value distribution by its random variable. When we say that the value function is the mean of the value distribution we are saying that the value function is the expected value, taken over all sources of intrinsic randomness \cite{goldstein1981intrinsic}, of the value distribution. This should highlight that the value distribution is not designed to capture the uncertainty in the estimate of the value function \cite{dearden98bayesian,engel05reinforcement}, that is the \textit{parametric uncertainty}, but rather the randomness in the returns intrinsic to the MDP.

Temporal difference (TD) methods significantly speed up the learning process by incrementally improving an estimate of $Q^\pi$ using dynamic programming through the \textit{Bellman operator} \cite{bellman57dynamic},
\begin{equation}
    \nonumber \cTpi Q(x, a) = \mathbb{E} \left[ R(x, a) \right] + \gamma \mathbb{E}_{P, \pi} \left[ Q(x', a') \right].
\end{equation}
Similarly, the value distribution can be computed through dynamic programming using a \textit{distributional Bellman operator} \cite{c51},
\begin{eqnarray}
    \cTpi Z(x, a) :\overset{D}{=} R(x, a) + \gamma Z(x', a'),\\
    \nonumber x' \sim P(\cdot | x, a), a' \sim \pi(\cdot | x'),
\end{eqnarray}
where $Y :\overset{D}{=} U$ denotes equality of probability laws, that is the random variable $Y$ is distributed according to the same law as $U$.

\begin{figure}[t]
\begin{center}
\includegraphics[width=.4\textwidth]{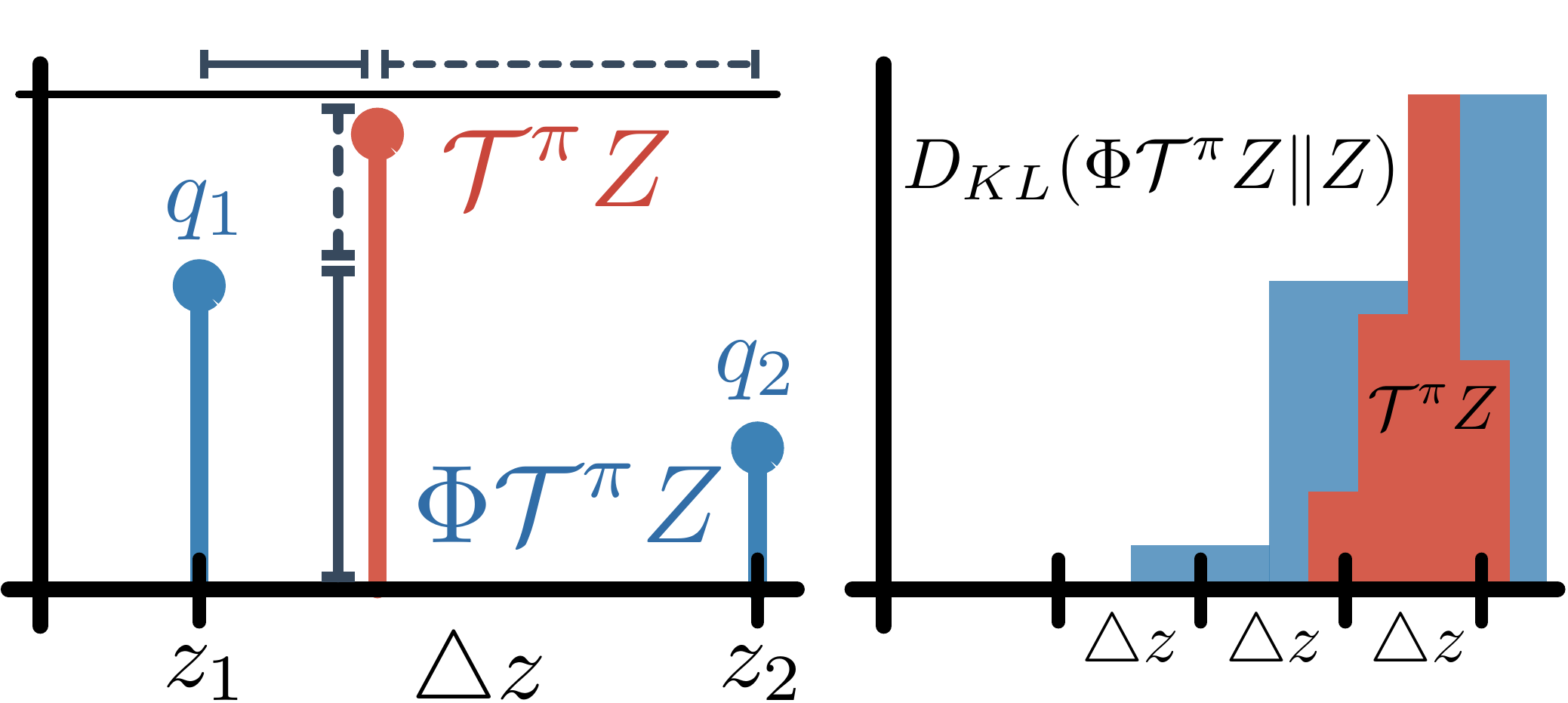}
\end{center}
\caption{Projection used by $\cfo$ assigns mass inversely proportional to distance from nearest support. Update minimizes KL between projected target and estimate.\label{fig:c51proj}}
\end{figure}

The $\cfo$ algorithm models $Z^\pi(x,a)$ using a discrete distribution supported on a ``comb'' of fixed locations $z_1 \leq \cdots \leq z_N$ uniformly spaced over a predetermined interval. The parameters of that distribution are the probabilities $q_i$, expressed as logits, associated with each location $z_i$. Given a current value distribution, the $\cfo$ algorithm applies a projection step $\Phi$ to map the target $\cTpi Z$ onto its finite element support, followed by a Kullback-Leibler (KL) minimization step (see Figure~\ref{fig:c51proj}). $\cfo$ achieved state-of-the-art performance on Atari 2600 games, but did so with a clear disconnect with the theoretical results of \citet{c51}. We now review these results before extending them to the case of approximate distributions.

\subsection{The Wasserstein Metric}

The $p$-Wasserstein metric $W_p$, for $p \in [1, \infty]$, also known as the Mallows metric \cite{bickel81asymptotic} or the Earth Mover's Distance (EMD) when $p=1$ \cite{levina2001earth}, is an integral probability metric between distributions.
The $p$-Wasserstein distance is characterized as the $L^p$ metric on inverse cumulative distribution functions (inverse CDFs) \cite{muller1997integral}. That is, the $p$-Wasserstein metric between distributions $U$ and $Y$ is given by,\footnote{For $p = \infty$, $W_\infty(Y, U) = \sup_{\omega \in [0, 1]} |F_Y^{-1}(\omega) - F_U^{-1}(\omega)|$.}
\begin{equation}
    W_p(U, Y) = \left( \int_0^1 |F_Y^{-1}(\omega) - F_U^{-1}(\omega)|^p d\omega \right)^{1/p} \, ,
\end{equation}
where for a random variable $Y$, the inverse CDF $F^{-1}_Y$ of $Y$ is defined by
\begin{equation}\label{eqn:quantile}
    F^{-1}_Y(\omega) := \inf \{ y \in \mathbb{R} : \omega \le F_Y(y) \} \, ,
\end{equation}
where $F_Y(y) = Pr(Y \leq y)$ is the CDF of $Y$. Figure~\ref{fig:wmin} illustrates the 1-Wasserstein distance as the area between two CDFs.

Recently, the Wasserstein metric has been the focus of increased research due to its appealing properties of respecting the underlying metric distances between outcomes \cite{wgan,bellemare17cramer}. 
Unlike the Kullback-Leibler divergence, the Wasserstein metric is a true probability metric and considers both the probability of and the distance between various outcome events. These properties make the Wasserstein well-suited to domains where an underlying similarity in outcome is more important than exactly matching likelihoods.

\subsection{Convergence of Distributional Bellman Operator}

In the context of distributional RL, let $\cZ$ be the space of action-value distributions with finite  moments:
\begin{align*}
    \mathcal{Z} = \{ &Z : \mathcal{X} \times \mathcal{A} \rightarrow \mathscr{P}(\mathbb{R}) |\\     &\mathbb{E}\left\lbrack |Z(x, a)|^p \right\rbrack < \infty, \ \forall (x, a), p \geq 1 \}.
\end{align*}
Then, for two action-value distributions $Z_1, Z_2 \in \cZ$, we will use the maximal form of the Wasserstein metric introduced by \cite{c51},
\begin{equation}\label{eqn:maxw}
    \dip(Z_1, Z_2) := \sup_{x,a} W_p(Z_1(x, a), Z_2(x, a)).
\end{equation}
It was shown that $\dip$ is a metric over value distributions. Furthermore, the distributional Bellman operator $\cTpi$ is a contraction in $\dip$, a result that we now recall.
\begin{lem}[Lemma 3, \citeauthor{c51} \citeyear{c51}]\label{lem:wasserstein_contraction_operator}
$\cTpi$ is a $\gamma$-contraction: for any two $Z_1, Z_2 \in \cZ$,
\begin{equation*}
    \dip(\cTpi Z_1, \cTpi Z_2) \le \gamma \dip(Z_1, Z_2) .
\end{equation*}
\end{lem}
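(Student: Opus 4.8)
The plan is to exploit the supremum structure of $\dip$ together with the optimal-coupling characterization of the one-dimensional Wasserstein distance. Since $\dip(\cTpi Z_1, \cTpi Z_2) = \sup_{x,a} W_p(\cTpi Z_1(x,a), \cTpi Z_2(x,a))$, it suffices to bound $W_p(\cTpi Z_1(x,a), \cTpi Z_2(x,a))$ by $\gamma\,\dip(Z_1,Z_2)$ for each fixed pair $(x,a)$ and then take the supremum on the left. The inverse-CDF characterization of $W_p$ above is awkward for handling the reward shift and the averaging over next states, so first I would record that on $\bR$ this $L^p$-on-inverse-CDFs quantity coincides with the optimal-transport cost, $W_p(U,V)^p = \inf_\lambda \mathbb{E}_{(U,V)\sim\lambda}[|U-V|^p]$, the infimum being over all couplings $\lambda$ with the prescribed marginals (the comonotone coupling induced by the inverse CDFs being the optimal one). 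This is the key switch, because couplings compose, which is exactly what the Bellman recursion requires.

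Next I would build an explicit coupling of $\cTpi Z_1(x,a)$ and $\cTpi Z_2(x,a)$ that realizes the bound. Recall $\cTpi Z_i(x,a) \Deq R(x,a) + \gamma Z_i(x',a')$ with $x'\sim P(\cdot\,|\,x,a)$ and $a'\sim\pi(\cdot\,|\,x')$. For each next pair $(x',a')$, let $(U',V')$ denote an optimal coupling of $Z_1(x',a')$ and $Z_2(x',a')$ attaining $W_p(Z_1(x',a'),Z_2(x',a'))$. I would then sample $(x',a')$ from the transition--policy law, draw the shared reward $R(x,a)$, and draw $(U',V')$ from the corresponding optimal coupling, forming the pair $\tilde U = R(x,a)+\gamma U'$ and $\tilde V = R(x,a)+\gamma V'$. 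By construction $\tilde U$ and $\tilde V$ have exactly the laws of $\cTpi Z_1(x,a)$ and $\cTpi Z_2(x,a)$, so this is a valid coupling and hence an upper bound for $W_p$.

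Under this coupling the shared reward cancels in the difference, giving $|\tilde U - \tilde V|^p = \gamma^p\,|U'-V'|^p$. Taking expectations and conditioning on $(x',a')$ yields $\mathbb{E}[|\tilde U - \tilde V|^p] = \gamma^p\,\mathbb{E}_{(x',a')}\big[W_p(Z_1(x',a'),Z_2(x',a'))^p\big] \le \gamma^p\,\dip(Z_1,Z_2)^p$, the final inequality using $W_p(Z_1(x',a'),Z_2(x',a'))\le\dip(Z_1,Z_2)$ for every $(x',a')$. Taking $p$-th roots gives $W_p(\cTpi Z_1(x,a),\cTpi Z_2(x,a))\le\gamma\,\dip(Z_1,Z_2)$, and the supremum over $(x,a)$ then closes the argument. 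For $p=\infty$ I would run the same coupling construction with the essential supremum in place of the $L^p$ norm.

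The main obstacle is the pair of compositions hidden inside $\cTpi$ — the affine map $z\mapsto R(x,a)+\gamma z$ and the mixture over next states — which the inverse-CDF definition does not address directly. The crux is therefore recognizing that the coupling characterization renders both steps transparent: the affine map contributes only the scalar factor $\gamma^p$ (the reward cancelling out), while the mixture is absorbed by conditioning on $(x',a')$. One subtlety to keep straight is that the reward may be statistically coupled to the transition; because the \emph{same} reward sample is fed into both marginals, it cancels in $\tilde U - \tilde V$ regardless of any such dependence, so the argument goes through unchanged.
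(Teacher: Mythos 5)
The paper does not actually prove this lemma: it is quoted, with attribution, as Lemma~3 of \citet{c51}, so there is no in-paper proof to compare against. Your coupling argument is correct and is essentially the standard proof of that result. The original reference reaches the same bound by factoring it into three elementary properties of $W_p$ --- invariance under a common additive random shift, positive homogeneity under scaling by $\gamma$, and a convexity-type inequality for mixtures over next states --- each of which is itself established by exhibiting a coupling; your proof simply builds the one explicit coupling that performs all three steps at once (shared reward sample, scaling by $\gamma$, and conditioning on $(x',a')$ to absorb the mixture). What your packaging buys is that the reward cancellation visibly requires no independence between $R(x,a)$ and the transition, a point the ``additive shift'' lemma in the factored version can obscure. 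Two minor points you should make explicit if writing this up: the family of optimal couplings must be chosen measurably in $(x',a')$, which is automatic here because the comonotone coupling $(F_{Z_1(x',a')}^{-1}(\omega), F_{Z_2(x',a')}^{-1}(\omega))$ for a single uniform $\omega$ is an explicit optimal choice in one dimension; and for $p=\infty$ the same comonotone coupling attains the infimum, so the essential-supremum variant of your final estimate does go through as you claim.
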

Lemma \ref{lem:wasserstein_contraction_operator} tells us that $\dip$ is a useful metric for studying the behaviour of distributional reinforcement learning algorithms, in particular to show their convergence to the fixed point $Z^\pi$. Moreover, the lemma suggests that an effective way in practice to learn value distributions is to attempt to minimize the Wasserstein distance between a distribution $Z$ and its Bellman update $\mathcal{T}^\pi Z$, analogous to the way that TD-learning attempts to iteratively minimize the $L^2$ distance between $Q$ and $\mathcal{T} Q$. 

Unfortunately, another result shows that we cannot in general minimize the Wasserstein metric (viewed as a loss) using stochastic gradient descent.
\begin{thm}[Theorem 1, \citeauthor{bellemare17cramer} \citeyear{bellemare17cramer}]\label{thm:biased_gradients}
Let $\hat{Y}_m := \tfrac{1}{m} \sum_{i=1}^m \delta_{Y_i}$ be the empirical distribution derived from samples $Y_1, \dots, Y_m$ drawn from a Bernoulli distribution $B$. Let $B_\mu$ be a Bernoulli distribution parametrized by $\mu$, the probability of the variable taking the value $1$. Then the minimum of the expected sample loss is in general different from the minimum of the true Wasserstein loss; that is,
\begin{equation*}
    \argmin_\mu \expect_{Y_{1:m}} \big [ W_p(\hat Y_m, B_\mu) \big ] \ne \argmin_\mu W_p(B , B_\mu) .
\end{equation*}
\end{thm}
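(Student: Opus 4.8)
The plan is to use the closed-form expression for the Wasserstein distance between two distributions supported on $\{0,1\}$ and then to exhibit a single parameter setting in which the two minimizers separate; since the claim only asserts that they differ \emph{in general}, one explicit counterexample to the equality is enough.

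First I would compute $W_p(\hat Y_m, B_\mu)$ in closed form. Let $\hat p := \tfrac{1}{m}\sum_{i=1}^m \indic{Y_i = 1}$ denote the empirical frequency of ones and let $p_0$ be the parameter of $B$. Both $\hat Y_m$ and $B_\mu$ are supported on $\{0,1\}$, so by \eqnref{quantile} each inverse CDF is a single step from $0$ to $1$, jumping at $\omega = 1 - \hat p$ and $\omega = 1 - \mu$ respectively. The integrand $|F_{\hat Y_m}^{-1}(\omega) - F_{B_\mu}^{-1}(\omega)|^p$ is then the indicator of the interval lying between these two jump points, whose length is $|\hat p - \mu|$, giving $W_p(\hat Y_m, B_\mu) = |\hat p - \mu|^{1/p}$ and, identically, $W_p(B, B_\mu) = |p_0 - \mu|^{1/p}$.

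Next I would compare the two minimizers. The true loss $|p_0 - \mu|^{1/p}$ is uniquely minimized at $\mu = p_0$, so $\argmin_\mu W_p(B, B_\mu) = p_0$. For the expected sample loss $L_m(\mu) := \expect_{\hat p}\big[\,|\hat p - \mu|^{1/p}\,\big]$ I would argue that $p_0$ is not even a stationary point: differentiating through the expectation (away from the kink $\hat p = \mu$) yields $L_m'(p_0) \propto -\expect[\,|\hat p - p_0|^{1/p - 1}\mathrm{sign}(\hat p - p_0)\,]$, which vanishes only under a symmetry of the law of $\hat p$ about $p_0$. Since $m\hat p$ is Binomial$(m, p_0)$, asymmetric about its mean whenever $p_0 \ne \tfrac12$, this derivative is generically nonzero, so $\argmin_\mu L_m(\mu) \ne p_0$. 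To certify the separation without any delicacy I would simply take $m = 1$ and $p = 1$, where $L_1(\mu) = p_0(1 - \mu) + (1 - p_0)\mu = p_0 + (1 - 2p_0)\mu$ is linear on $[0,1]$ and hence minimized at the endpoint $\mu = 0$ (for $p_0 < \tfrac12$), which is plainly different from $\mu = p_0$.

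The hard part is not the analysis but isolating a clean separation: because $|\cdot|^{1/p}$ is non-differentiable at $\hat p = \mu$, the stationarity argument must handle the kink (via one-sided derivatives or by working directly with the convex map $\mu \mapsto L_m(\mu)$), and one must verify that the asymmetry of the Binomial law genuinely survives the expectation rather than cancelling. The explicit $m=1$, $p=1$ computation avoids all of this and already establishes the strict inequality, so the cleanest write-up is to prove the closed form for $W_p$ in general and then defer to that elementary instance.
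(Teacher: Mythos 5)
The paper does not actually prove this statement: Theorem~\ref{thm:biased_gradients} is imported verbatim from \citet{bellemare17cramer} and used as a black box, so there is no in-paper proof to compare against. Judged on its own, your argument is correct and self-contained. The closed form is right: for two laws supported on $\{0,1\}$ with masses $q$ and $\mu$ at $1$, the inverse CDFs are step functions jumping at $1-q$ and $1-\mu$, the integrand $|F^{-1}_{\hat Y_m}(\omega)-F^{-1}_{B_\mu}(\omega)|^p$ is the indicator of an interval of length $|q-\mu|$, and hence $W_p=|q-\mu|^{1/p}$; the true loss $|p_0-\mu|^{1/p}$ is uniquely minimized at $\mu=p_0$, while for $m=1$, $p=1$ the expected sample loss $p_0+(1-2p_0)\mu$ is linear on $[0,1]$ and minimized at an endpoint, which differs from $p_0$ whenever $p_0\notin\{0,\tfrac12,1\}$. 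Since the statement as reproduced here is an ``in general'' (existential) claim, this single explicit instance settles it. Your approach is close in spirit to the paper's proof of its own Proposition~\ref{prop:biased_transpose_gradients} (the quantile-parametrization analogue), which likewise identifies the true minimizer and then shows the expected sample loss is not stationary there by a sign argument on the gradient.

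Two caveats worth recording. First, your general-$(m,p)$ stationarity paragraph is, as you acknowledge, only heuristic: for $p>1$ the map $\mu\mapsto\mathbb{E}\,|\hat p-\mu|^{1/p}$ is not convex and has cusp-type local minima with infinite one-sided slopes at every atom of $\hat p$, so first-order reasoning at $\mu=p_0$ certifies nothing on its own; you correctly route around this by resting the proof on the explicit instance. Second, if one reads the original theorem as asserting the failure for \emph{every} $m$ and every $p\ge 1$ (which is closer to how \citet{bellemare17cramer} state it), your certified instance covers only $m=1$, $p=1$. The $p=1$, general-$m$ case is easy to close cleanly --- $\mathbb{E}|\hat p-\mu|$ is minimized at a median of $\hat p=K/m$ with $K\sim\mathrm{Binomial}(m,p_0)$, and the median differs from the mean $p_0$ for generic $p_0$ --- but the $p>1$, general-$m$ case would need an additional two-point comparison (e.g.\ showing $L_m(1)<L_m(p_0)$ for $p_0$ near $1$) rather than a derivative argument. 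For the statement as quoted in this paper, however, your proof is sufficient.
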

This issue becomes salient in a practical context, where the value distribution must be approximated. Crucially, the $\cfo$ algorithm is not guaranteed to minimize any $p$-Wasserstein metric. This gap between theory and practice in distributional RL is not restricted to $\cfo$. \citet{morimura10parametric} parameterize a value distribution with the mean and scale of a Gaussian or Laplace distribution, and minimize the KL divergence between the target $\cTpi Z$ and the prediction $Z$. They demonstrate that value distributions learned in this way are sufficient to perform risk-sensitive Q-Learning. However, any theoretical guarantees derived from their method can only be asymptotic; the Bellman operator is at best a non-expansion in KL divergence.

\begin{figure}[t]
\begin{center}
\includegraphics[width=.48\textwidth]{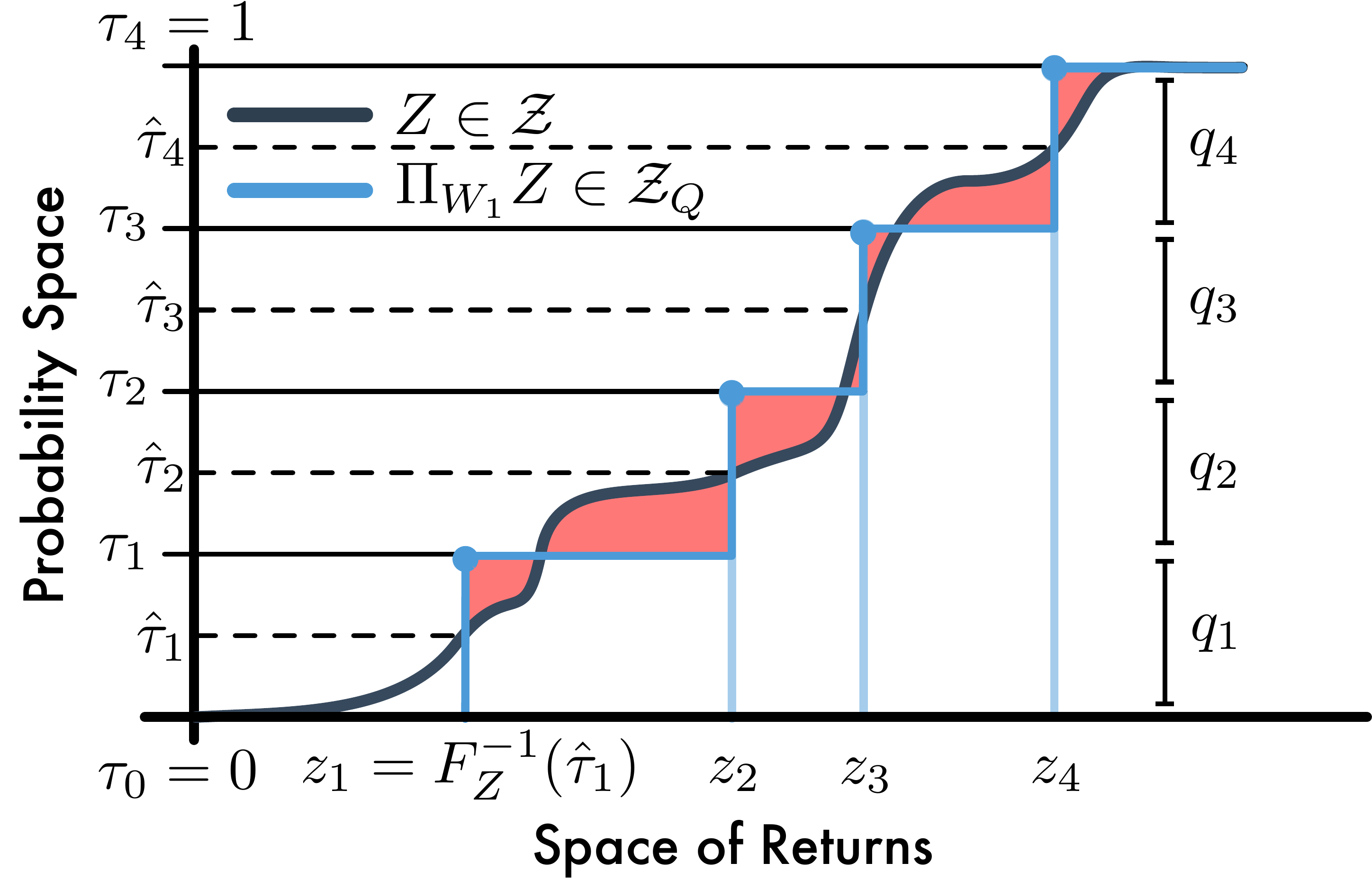}
\end{center}
\caption{1-Wasserstein minimizing projection onto $N=4$ uniformly weighted Diracs. Shaded regions sum to form the 1-Wasserstein error.\label{fig:wmin}}
\end{figure}

\section{Approximately Minimizing Wasserstein}
Recall that $\cfo$ approximates the distribution at each state by attaching variable (parametrized) probabilities $q_1, \dots, q_N$ to fixed locations $z_1 \leq \cdots \leq z_N$. Our approach is to ``transpose'' this parametrization by considering fixed probabilities but variable locations. 
Specifically, we take uniform weights, so that $q_i = 1/N$ for each $i=1,\ldots, N$.

Effectively, our new approximation aims to estimate \emph{quantiles} of the target distribution. Accordingly, we will call it a \emph{quantile distribution}, and let $\qcZ$ be the space of quantile distributions for fixed $N$. We will denote the cumulative probabilities associated with such a distribution (that is, the discrete values taken on by the CDF) by $\tau_1 ,\ldots, \tau_N$, so that $\tau_i = \frac{i}{N}$ for $i=1,\ldots,N$. We will also write $\tau_0=0$ to simplify notation.

Formally, let $\theta : \cX \times \cA \to \bR^N$ be some parametric model. A quantile distribution $Z_\theta \in \qcZ$ maps each state-action pair $(x,a)$ to a uniform probability distribution supported on $\{ \theta_i(x,a) \}$. That is, 
\begin{equation}\label{eqn:definition_quantile_distribution}
    Z_\theta(x,a) := \tfrac{1}{N} \sum_{i=1}^N \delta_{\theta_i(x,a)},
\end{equation}
where $\delta_z$ denotes a Dirac at $z \in \mathbb{R}$.

Compared to the original parametrization, the benefits of a parameterized quantile distribution are threefold. First, (1) we are not restricted to prespecified bounds on the support, or a uniform resolution, potentially leading to significantly more accurate predictions when the range of returns vary greatly across states. This also (2) lets us do away with the unwieldy projection step present in $\cfo$, as there are no issues of disjoint supports. Together, these obviate the need for domain knowledge about the bounds of the return distribution when applying the algorithm to new tasks. Finally, (3) this reparametrization allows us to minimize the Wasserstein loss, without suffering from biased gradients, specifically, using \emph{quantile regression}.

\subsection{The Quantile Approximation}

It is well-known that in reinforcement learning, the use of function approximation may result in instabilities in the learning process \cite{tsitsiklis97analysis}. Specifically, the Bellman update projected onto the approximation space may no longer be a contraction. In our case, we analyze the distributional Bellman update, projected onto a parameterized quantile distribution, and prove that the combined operator is a contraction.

\subsubsection{Quantile Projection}

We are interested in quantifying the projection of an arbitrary value distribution $Z \in \cZ$ onto $\qcZ$, that is
$$\Pi_{W_1} Z := \argmin_{Z_\theta \in \qcZ} W_1(Z, Z_\theta),$$

Let $Y$ be a distribution with bounded first moment and $U$ a uniform distribution over $N$ Diracs as in \eqnref{definition_quantile_distribution}, with support $\{ \theta_1, \dots, \theta_N \}$. Then
\begin{equation*}
    W_1(Y, U) = \sum_{i=1}^{N} \int_{\tau_{i-1}}^{\tau_{i}} | F_Y^{-1}(\omega) - \theta_i| d\omega.
\end{equation*}
\begin{restatable}{lem}{wonemidpoint}\label{w1_midpoint}
For any $\tau, \tau' \in [0, 1]$ with $\tau < \tau'$ and cumulative distribution function $F$ with inverse $F^{-1}$, the set of $\theta \in \mathbb{R}$ minimizing
\begin{equation*}
    \int_\tau^{\tau'} | F^{-1}(\omega) - \theta | d\omega \, ,
\end{equation*}
is given by
\begin{equation*}
    \left\{\theta \in \mathbb{R} \bigg| F(\theta) = \left(\frac{\tau + \tau'}{2}\right)\right\}.
\end{equation*}
In particular, if $F^{-1}$ is the inverse CDF, then $F^{-1}((\tau+\tau')/2)$ is always a valid minimizer, and if $F^{-1}$ is continuous at $(\tau+\tau')/2$, then $F^{-1}((\tau+\tau')/2)$ is the unique minimizer.
\end{restatable}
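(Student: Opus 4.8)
The plan is to exploit the fact that the objective $g(\theta) := \int_\tau^{\tau'} |F^{-1}(\omega) - \theta|\,d\omega$ is a convex function of the single real variable $\theta$, so that its set of minimizers is exactly $\{\theta : g'_-(\theta) \le 0 \le g'_+(\theta)\}$. First I would observe that for each fixed $\omega$ the map $\theta \mapsto |F^{-1}(\omega) - \theta|$ is convex with one-sided derivatives equal to $\pm 1$ according to the sign of $\theta - F^{-1}(\omega)$; integrating in $\omega$ and interchanging derivative and integral (justified by dominated convergence, since the difference quotients are bounded by $1$ on the finite-measure interval $[\tau,\tau']$) yields the one-sided derivatives of $g$,
\begin{align*}
g'_+(\theta) &= 2\,\lambda\{\omega \in [\tau,\tau'] : F^{-1}(\omega) \le \theta\} - (\tau' - \tau),\\
g'_-(\theta) &= 2\,\lambda\{\omega \in [\tau,\tau'] : F^{-1}(\omega) < \theta\} - (\tau' - \tau),
\end{align*}
where $\lambda$ denotes Lebesgue measure.

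The next step is to convert these sublevel-set measures of $F^{-1}$ into statements about $F$ using the standard Galois connection for generalized inverses, namely $F^{-1}(\omega) \le \theta \iff \omega \le F(\theta)$ (valid because $F$ is nondecreasing and right-continuous), together with its strict companion $F^{-1}(\omega) < \theta \iff \omega \le F(\theta^-)$. Substituting these and evaluating the measures gives, whenever $F(\theta) \in [\tau,\tau']$,
\begin{equation*}
g'_+(\theta) = 2F(\theta) - (\tau+\tau'), \qquad g'_-(\theta) = 2F(\theta^-) - (\tau+\tau').
\end{equation*}
I would also dispose of the cases $F(\theta) < \tau$ and $F(\theta) > \tau'$ separately: there the relevant measure is $0$ or $\tau'-\tau$, so $g'_\pm$ is strictly negative or strictly positive and no such $\theta$ can be a minimizer. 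Hence every minimizer $\theta^\star$ has $F(\theta^\star) \in [\tau,\tau']$ and satisfies $F(\theta^{\star-}) \le \tfrac{\tau+\tau'}{2} \le F(\theta^\star)$.

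Finally I would read off the claimed characterization. When $F$ is continuous at $\theta^\star$ the inequality collapses to $F(\theta^\star) = \tfrac{\tau+\tau'}{2}$, recovering the stated minimizing set; conversely any $\theta$ with $F(\theta) = \tfrac{\tau+\tau'}{2}$ obeys the two-sided inequality and is a minimizer. The ``in particular'' claims then follow from the defining property of the inverse CDF in \eqnref{quantile}: with $c = \tfrac{\tau+\tau'}{2}$, the point $F^{-1}(c)$ always satisfies $F(F^{-1}(c)^-) \le c \le F(F^{-1}(c))$, so it is always a valid minimizer, and when $F^{-1}$ is continuous at $c$ the flat-region ambiguity disappears and the minimizer is unique.

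The main obstacle I anticipate is the careful bookkeeping around the generalized inverse at points where $F$ jumps or is flat: making the equivalences $F^{-1}(\omega) \le \theta \iff \omega \le F(\theta)$ and its strict counterpart precise, and correctly identifying which value ($F(\theta)$ versus the left limit $F(\theta^-)$) governs each one-sided derivative. Everything else — convexity, differentiation under the integral, and the endpoint case analysis — is routine once this correspondence is nailed down.
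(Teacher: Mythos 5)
Your proposal is correct and follows essentially the same route as the paper's proof: convexity of $\theta \mapsto \int_\tau^{\tau'}|F^{-1}(\omega)-\theta|\,d\omega$, differentiation under the integral to obtain a (sub)gradient equal to $2F(\theta)-(\tau+\tau')$ on the relevant range, and the first-order optimality condition for convex functions. You are if anything more careful than the paper, which simply asserts the subgradient formula $\int_\tau^{F(\theta)}-1\,d\omega+\int_{F(\theta)}^{\tau'}1\,d\omega$ and sets it to zero, whereas you justify the identification of the sublevel-set measures with $F(\theta)$ and $F(\theta^-)$ via the Galois connection and handle the endpoint and jump cases explicitly.
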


These \textit{quantile midpoints} will be denoted by $\hat{\tau}_i = \frac{\tau_{i-1} + \tau_{i}}{2}$ for $1 \le i \le N$.
Therefore, by Lemma~\ref{w1_midpoint}, the values for $\{\theta_1, \theta_1, \ldots, \theta_N\}$ that minimize $W_1(Y, U)$ are given by $\theta_i = F_Y^{-1}(\hat{\tau}_i)$. Figure~\ref{fig:wmin} shows an example of the quantile projection $\Pi_{W_1} Z$ minimizing the $1$-Wasserstein distance to $Z$.\footnote{We save proofs for the appendix due to space limitations.}

\subsection{Quantile Regression}

The original proof of Theorem \ref{thm:biased_gradients} only states the \emph{existence} of a distribution whose gradients are biased. As a result, we might hope that our quantile parametrization leads to unbiased gradients. Unfortunately, this is not true.
\begin{restatable}{prop}{biasedgradients}\label{prop:biased_transpose_gradients}
Let $Z_\theta$ be a quantile distribution, and $\hat{Z}_m$ the empirical distribution composed of $m$ samples from $Z$. Then for all $p \ge 1$, there exists a $Z$ such that
\begin{equation*}
    \argmin \expect [ W_p(\hat{Z}_m, Z_\theta) ] \neq \argmin W_p(Z, Z_\theta) .
\end{equation*}
\end{restatable}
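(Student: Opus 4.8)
The plan is to exhibit, for each $p \ge 1$, an explicit target distribution $Z$ for which the two minimizers provably disagree; in the spirit of Theorem~\ref{thm:biased_gradients} it suffices to take a single sample, $m=1$, so that $\hat Z_1 = \delta_{Y_1}$ with $Y_1 \sim Z$. The two objectives then take very different forms. On one hand, using the decomposition of the Wasserstein loss into the per-quantile integrals $\sum_{i}\int_{\tau_{i-1}}^{\tau_i}|F_Z^{-1}(\omega)-\theta_i|^p\,d\omega$, the true loss $W_p(Z,Z_\theta)$ is separable across the $\theta_i$ and (by Lemma~\ref{w1_midpoint} when $p=1$, and by the analogous per-segment $L^p$ problem when $p>1$) is minimized by \emph{spreading} the $\theta_i$ across distinct quantiles of $Z$. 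On the other hand, since $\hat Z_1$ is a single Dirac, $\mathbb{E}_{Y_1}[W_p(\delta_{Y_1}, Z_\theta)] = \mathbb{E}_{Y_1\sim Z}\big[(\tfrac1N\sum_i |Y_1-\theta_i|^p)^{1/p}\big]$, whose minimizer I will show \emph{collapses} the locations together rather than spreading them.

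Concretely I would split into two cases. For $p=1$, take $N=2$ and $Z$ an asymmetric two-point distribution, namely Bernoulli($q$) with $\tfrac14 < q < \tfrac12$. Here the sample objective separates as $\tfrac12(\mathbb{E}|X-\theta_1| + \mathbb{E}|X-\theta_2|)$ and is therefore minimized by placing \emph{both} $\theta_1,\theta_2$ at the median of $Z$ (which is $0$), whereas the true objective is minimized at the quartile midpoints $\theta_i = F_Z^{-1}(\hat\tau_i)$, giving the distinct pair $(0,1)$. For $p>1$, take $N=1$: the sample objective becomes $\mathbb{E}_{Y_1}|Y_1-\theta| = \mathbb{E}_Z|X-\theta|$, independent of $p$ and minimized at the median, while the true objective $W_p(Z,\delta_\theta) = (\mathbb{E}_Z|X-\theta|^p)^{1/p}$ is minimized at the ``$L^p$-center'' of $Z$. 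Choosing $Z$ asymmetric again (Bernoulli($q$), $0<q<\tfrac12$), I would compute the $L^p$-center in closed form and verify it is strictly interior, hence distinct from the median.

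The main obstacle is not the $p=1$ case, which follows cleanly from this collapse-versus-spread phenomenon, but rather establishing strict separation for general $p>1$: I must show the $L^p$-minimizer $\theta^\star$ of $\mathbb{E}_Z|X-\theta|^p$ is genuinely distinct from the median for at least one asymmetric $Z$. For the Bernoulli example this reduces to solving $(1-q)\theta^{p-1} = q(1-\theta)^{p-1}$, giving $\theta^\star/(1-\theta^\star) = (q/(1-q))^{1/(p-1)}$, which lies strictly between $0$ and $\tfrac12$ while the median is $0$; the only remaining care is to confirm this stationary point is the unique interior minimizer. Finally, although $m=1$ already suffices for the stated existence claim, I would remark that the same mechanism persists for general $m$: strict concavity of $t\mapsto t^{1/p}$ applied inside $\mathbb{E}[(\tfrac1m\sum_j|Y_j-\theta|^p)^{1/p}]$ (for $p>1$), and the non-separating outer power more generally, keep the expected-sample minimizer biased away from the true one, so the effect is not an artifact of the single-sample simplification.
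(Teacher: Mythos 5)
Your proof is correct, but it takes a genuinely different route from the paper's. The paper fixes $Z$ uniform on $\{1,\dots,N\}$ with $m=N$, notes that the unique minimizer of the true loss is $Z_\theta=Z$, and shows this point cannot minimize the expected sample loss via a stationarity argument: $\nabla_{\theta_1}\mathbb{E}[W_p(\hat Z_N,Z_\theta)]\big|_{\theta_1=1}<0$, because with positive probability the sample has no atom at $1$ and the optimal coupling then pulls $\theta_1$ rightward. You instead take $m=1$ and compute both minimizers in closed form: for $p=1$ (with $N=2$) the single-sample loss separates as $\tfrac12\sum_i\mathbb{E}|X-\theta_i|$ and collapses every location to the median of the Bernoulli$(q)$ target, while the true $W_1$ minimizer spreads them to $F_Z^{-1}(1/4)=0$ and $F_Z^{-1}(3/4)=1$; for $p>1$ (with $N=1$) the sample loss is again the median objective while the true loss is the $L^p$-center, and your stationarity equation $(1-q)\theta^{p-1}=q(1-\theta)^{p-1}$, together with strict convexity of $\theta\mapsto(1-q)|\theta|^p+q|1-\theta|^p$ guaranteeing the interior stationary point is the unique minimizer, gives strict separation from the median at $0$. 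What your approach buys is elementarity (no interchange of gradient and expectation, no reasoning about optimal transport plans) and the conceptually sharp ``collapse versus spread'' picture in the single-sample regime most relevant to SGD. What the paper's construction buys is uniformity in $N$: one argument exhibits the bias for every $N$, whereas you use $N=2$ for $p=1$ and $N=1$ for $p>1$; since the proposition only asserts existence (of $Z$, and implicitly of $m$), this is a legitimate simplification, though worth flagging if one wants the conclusion for a prescribed $N\ge 2$. Your closing remark about general $m$ is not needed for the claim and is only heuristic as written, so keep it clearly labelled as a remark or drop it.
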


However, there is a method, more widely used in economics than machine learning, for unbiased stochastic approximation of the quantile function. \textit{Quantile regression}, and \textit{conditional quantile regression}, are methods for approximating the quantile functions of distributions and conditional distributions respectively \cite{qrbook}. These methods have been used in a variety of settings where outcomes have intrinsic randomness \cite{koenker2001quantile}; from food expenditure as a function of household income \cite{engel1857productions}, to studying value-at-risk in economic models \cite{taylor1999quantile}.

The quantile regression loss, for quantile $\tau \in [0,1]$, is an asymmetric convex loss function that penalizes overestimation errors with weight $\tau$ and underestimation errors with weight $1 - \tau$. For a distribution $Z$, and a given quantile $\tau$, the value of the quantile function $F_Z^{-1}(\tau)$ may be characterized as the minimizer of the \emph{quantile regression loss}
\begin{align}
    \nonumber \cL^\tau_{\textsc{qr}}(\theta) := \mathbb{E}_{\hat{Z} \sim Z}[\rho_\tau(\hat{Z} - \theta)] \, , \text{ where}\\
    \rho_\tau(u) = u(\tau - \delta_{\{u < 0\}}),\ \forall u \in \mathbb{R}.\label{eqn:qr_loss}
\end{align}
More generally, by Lemma \ref{w1_midpoint} we have that the minimizing values of $\{\theta_1, \ldots, \theta_N\}$ for $W_1(Z, Z_\theta)$ are those that minimize the following objective:
\[
\sum_{i=1}^N \mathbb{E}_{\hat{Z} \sim Z}[ \rho_{\hat{\tau}_i}(\hat{Z} - \theta_i) ]
\]

In particular, this loss gives unbiased sample gradients. As a result, we can find the minimizing $\{ \theta_1, \ldots, \theta_N \}$ by stochastic gradient descent. 

\subsubsection{Quantile Huber Loss}
The quantile regression loss is not smooth at zero; as $u \to 0^+$, the gradient of Equation~\ref{eqn:qr_loss} stays constant. We hypothesized that this could limit performance when using non-linear function approximation. To this end, we also consider a modified quantile loss, called the \textit{quantile Huber loss}.\footnote{Our quantile Huber loss is related to, but distinct from that of \citet{aravkin2014sparse}.} This quantile regression loss acts as an asymmetric squared loss in an interval $[- \kappa, \kappa]$ around zero and reverts to a standard quantile loss outside this interval.

The Huber loss is given by \cite{huber1964robust},
\begin{align}
    \cL_\kappa(u) = \begin{cases}
        \frac{1}{2} u^2,\quad \ &\text{if } |u| \le \kappa\\
        \kappa(|u| - \frac{1}{2}\kappa),\quad \ &\text{otherwise}
    \end{cases}.
\end{align}
The quantile Huber loss is then simply the asymmetric variant of the Huber loss,
\begin{equation}\label{eqn:huber_quantile}
    \rho^\kappa_\tau(u) = |\tau - \delta_{\{ u < 0 \}}| \cL_\kappa(u).
\end{equation}
For notational simplicity we will denote $\rho^0_\tau = \rho_\tau$, that is, it will revert to the standard quantile regression loss.

\subsection{Combining Projection and Bellman Update}

We are now in a position to prove our main result, which states that the combination of the projection implied by quantile regression with the Bellman operator is a contraction. The result is in $\infty$-Wasserstein metric, i.e. the size of the largest gap between the two CDFs.
\begin{restatable}{prop}{Winftycontract}
Let $\Pi_{W_1}$ be the quantile projection defined as above, and when applied to value distributions gives the projection for each state-value distribution. For any two value distributions $Z_1, Z_2 \in \cZ$ for an MDP with countable state and action spaces,
\begin{equation}
    \bar{d}_\infty(\Pi_{W_1} \cTpi Z_1, \Pi_{W_1} \cTpi Z_2) \le \gamma \bar{d}_\infty (Z_1, Z_2).
\end{equation}
\end{restatable}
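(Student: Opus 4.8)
The plan is to factor the composed operator into a projection step and a Bellman step, and show that each behaves well in the $\infty$-Wasserstein metric. Concretely, I would prove that the quantile projection $\Pi_{W_1}$ is a non-expansion in $W_\infty$ (hence in $\bar d_\infty$), and then invoke Lemma~\ref{lem:wasserstein_contraction_operator} specialized to $p=\infty$, for which the countable state-action assumption is what lets us treat $\cTpi$ as a $\gamma$-contraction in $\bar d_\infty$. Chaining the two gives
\[
\bar{d}_\infty(\Pi_{W_1}\cTpi Z_1, \Pi_{W_1}\cTpi Z_2) \le \bar{d}_\infty(\cTpi Z_1, \cTpi Z_2) \le \gamma\, \bar{d}_\infty(Z_1, Z_2),
\]
which is the claim.

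The heart of the argument is the non-expansion of $\Pi_{W_1}$, which I would establish pointwise for each state-action pair. By Lemma~\ref{w1_midpoint}, the canonical $W_1$-minimizing projection of a distribution $Y$ is the uniform distribution over Diracs at $F_Y^{-1}(\hat\tau_i)$, where $\hat\tau_i = (2i-1)/(2N)$; its inverse CDF is the step function equal to $F_Y^{-1}(\hat\tau_i)$ on $(\tau_{i-1},\tau_i)$. Using the characterization $W_\infty(U,Y)=\sup_{\omega\in[0,1]}|F_U^{-1}(\omega)-F_Y^{-1}(\omega)|$ from the footnote, and the fact that both projected inverse CDFs are constant with the stated values on each interval, I would obtain
\[
W_\infty(\Pi_{W_1}Y_1,\Pi_{W_1}Y_2) = \max_{1\le i\le N}\big|F_{Y_1}^{-1}(\hat\tau_i)-F_{Y_2}^{-1}(\hat\tau_i)\big|.
\]
The final step bounds this maximum over a finite set of quantiles by the supremum over all of $[0,1]$,
\[
\max_{i}\big|F_{Y_1}^{-1}(\hat\tau_i)-F_{Y_2}^{-1}(\hat\tau_i)\big| \le \sup_{\omega\in[0,1]}\big|F_{Y_1}^{-1}(\omega)-F_{Y_2}^{-1}(\omega)\big| = W_\infty(Y_1,Y_2),
\]
giving the per-pair non-expansion; taking the supremum over $(x,a)$ lifts it to $\bar d_\infty$.

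The main conceptual obstacle — and the reason the proposition is stated in $W_\infty$ rather than $W_1$ — is that $\Pi_{W_1}$ is \emph{not} a non-expansion in $W_1$: restricting to the midpoints controls the supremum of the gap between inverse CDFs but not its integral, so the argument genuinely needs the $\infty$-metric. The observation that makes everything collapse is that the $W_1$-minimizing projection has the clean "evaluate the inverse CDF at fixed quantile midpoints" form, so projecting is just subsampling the inverse CDF, an operation that can only shrink a supremum. Minor care is needed to fix the canonical minimizer $\theta_i = F_Y^{-1}(\hat\tau_i)$ from Lemma~\ref{w1_midpoint} (since the minimizing set may not be a singleton) and to note that each $\hat\tau_i$ lies strictly inside $(0,1)$, so the inverse CDFs are evaluated at well-defined interior quantiles.
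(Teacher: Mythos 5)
Your proposal is correct, and it takes a genuinely different --- and considerably more economical --- route than the paper's. Both arguments rest on the $\gamma$-contraction of $\cTpi$ in $\bar d_\infty$ (the $p=\infty$ case of Lemma~\ref{lem:wasserstein_contraction_operator}, which the paper's own proof also invokes when it reduces to $\gamma=1$ and zero rewards), but they treat the projection differently. The paper never isolates $\Pi_{W_1}$ as a $W_\infty$ non-expansion: it attacks the composite $\Pi_{W_1}\cTpi$ directly, first splitting each $N$-atom successor distribution into $N$ single-Dirac states of an auxiliary MDP, and then proving Lemma~\ref{lem:1DiracCase}, a combinatorial argument showing that the $\tau$\textsuperscript{th} quantile of a mixture of Diracs moves by no more than the largest displacement of any atom. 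Your key step --- that projecting amounts to subsampling the inverse CDF at the interior midpoints, so that $W_\infty(\Pi_{W_1}Y_1,\Pi_{W_1}Y_2)=\max_i|F_{Y_1}^{-1}(\hat\tau_i)-F_{Y_2}^{-1}(\hat\tau_i)|\le\sup_{\omega}|F_{Y_1}^{-1}(\omega)-F_{Y_2}^{-1}(\omega)|=W_\infty(Y_1,Y_2)$ --- is exactly the content of the paper's Lemma~\ref{lem:WinftyIsMaxQuantileDiff} followed by the trivial bound of a finite maximum by a supremum, and it collapses the proof to two chained inequalities. Your route is more modular (the non-expansion of $\Pi_{W_1}$ is a statement about single real-valued distributions, independent of the MDP) and applies verbatim to arbitrary $Z_1,Z_2\in\cZ$, whereas the paper's proof is written for $Z,Y$ already of $N$-atom quantile form; the paper's route, in exchange, yields Lemma~\ref{lem:1DiracCase} as a standalone statement about quantiles of Bellman backups that does not pass through the inverse-CDF characterization of $W_\infty$. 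One small correction: the countability of the state and action spaces is not what makes $\cTpi$ a $\gamma$-contraction in $\bar d_\infty$ --- Lemma~\ref{lem:wasserstein_contraction_operator} needs no such hypothesis --- it is used in the paper's construction to index the accessible successor pairs by a countable set. Your closing remarks (fixing a canonical minimizer in Lemma~\ref{w1_midpoint}, and the failure of non-expansion in $\bar d_p$ for $p<\infty$, cf.\ Lemma~\ref{lem:dpnocontraction}) are both apt.
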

We therefore conclude that the combined operator $\Pi_{W_1} \cTpi$ has a unique fixed point $\hat{Z}^\pi$, and the repeated application of this operator, or its stochastic approximation, converges to $\hat{Z}^\pi$. Because ${\bar d}_p \le {\bar d}_\infty$, we conclude that convergence occurs for all $p\in [1,\infty]$. Interestingly, the contraction property does not directly hold for $p < \infty$; see Lemma \ref{lem:dpnocontraction} in the appendix.

\section{Distributional RL using Quantile Regression}
We can now form a complete algorithmic approach to distributional RL consistent with our theoretical results. That is, approximating the value distribution with a parameterized quantile distribution over the set of quantile midpoints, defined by Lemma~\ref{w1_midpoint}. Then, training the location parameters using quantile regression (Equation~\ref{eqn:qr_loss}).

\subsection{Quantile Regression Temporal Difference Learning}

Recall the standard TD update for evaluating a policy $\pi$,
\begin{eqnarray}
    \nonumber V(x) \leftarrow V(x) + \alpha (r + \gamma V(x') - V(x)),\\
    \nonumber a \sim \pi(\cdot | x), r \sim R(x, a), x' \sim P(\cdot | x, a).
\end{eqnarray}
TD allows us to update the estimated value function with a single unbiased sample following $\pi$. Quantile regression also allows us to improve the estimate of the quantile function for some target distribution, $Y(x)$, by observing samples $y \sim Y(x)$ and minimizing Equation~\ref{eqn:qr_loss}.

Furthermore, we have shown that by estimating the quantile function for well-chosen values of $\tau \in (0, 1)$ we can obtain an approximation with minimal 1-Wasserstein distance from the original (Lemma~\ref{w1_midpoint}). Finally, we can combine this with the distributional Bellman operator to give a target distribution for quantile regression. This gives us the quantile regression temporal difference learning ($\qrtd$) algorithm, summarized simply by the update,
\begin{eqnarray}
    &\theta_i(x) \leftarrow \theta_i(x) + \alpha (\hat{\tau}_i - \delta_{\{r + \gamma z' < \theta_i(x))\} }),\\
    \nonumber &a \sim \pi(\cdot | x), r \sim R(x, a), x' \sim P(\cdot | x, a), z' \sim Z_\theta(x'),
\end{eqnarray}
where $Z_\theta$ is a quantile distribution as in \eqnref{definition_quantile_distribution}, and $\theta_i(x)$ is the estimated value of $F^{-1}_{Z^\pi(x)}(\hat \tau_i)$ in state $x$. It is important to note that this update is for each value of $\hat \tau_i$ and is defined for a single sample from the next state value distribution. In general it is better to draw many samples of $z' \sim Z(x')$ and minimize the expected update. A natural approach in this case, which we use in practice, is to compute the update for all pairs of ($\theta_i(x), \theta_j(x')$). Next, we turn to a control algorithm and the use of non-linear function approximation.

\subsection{Quantile Regression $\dqn$}

Q-Learning is an off-policy reinforcement learning algorithm built around directly learning the optimal action-value function using the Bellman optimality operator \cite{watkins1992q},
\begin{equation*}
    \cT Q(x, a) = \expect \left[ R(x, a) \right] + \gamma \expect_{x' \sim P} \left[ \max_{a'} Q(x', a') \right].
\end{equation*}

The distributional variant of this is to estimate a state-action value distribution and apply a distributional Bellman optimality operator,
\begin{eqnarray}
    &\cT Z(x, a) = R(x, a) + \gamma Z(x', a'),\\
    \nonumber &x' \sim P(\cdot|x, a), a' = \argmax_{a'} \expect_{z \sim Z(x', a')} \left[ z \right].
\end{eqnarray}
Notice in particular that the action used for the next state is the greedy action with respect to the mean of the next state-action value distribution.

For a concrete algorithm we will build on the $\dqn$ architecture \cite{mnih15nature}. We focus on the minimal changes necessary to form a distributional version of $\dqn$. Specifically, we require three modifications to $\dqn$. First, we use a nearly identical neural network architecture as $\dqn$, only changing the output layer to be of size $|\mathcal{A}| \times N$, where $N$ is a hyper-parameter giving the number of quantile targets. Second, we replace the Huber loss used by $\dqn$\footnote{$\dqn$ uses gradient clipping of the squared error that makes it equivalent to a Huber loss with $\kappa = 1$.}, $L_{\kappa}(r_t + \gamma \max_{a'} Q(x_{t+1}, a') - Q(x_t, a_t))$ with $\kappa = 1$, with a quantile Huber loss (full loss given by Algorithm~\ref{alg:wdqn}). Finally, we replace RMSProp \cite{tieleman2012lecture} with Adam \cite{kingma2014adam}. We call this new algorithm quantile regression $\dqn$ ($\qrdqn$).

\begin{algorithm}[ht]
\caption{Quantile Regression Q-Learning}\label{alg:wdqn}
\begin{algorithmic}
\REQUIRE $N, \kappa$
\INPUT $x, a, r, x'$, $\gamma \in [0, 1)$
\STATE \textcolor{gray}{\# Compute distributional Bellman target}
\STATE $Q(x', a') := \sum\nolimits_j q_j \theta_j(x', a')$
\STATE $a^* \leftarrow \argmax_{a'} Q(x, a')$
\STATE $\cT \theta_j \leftarrow r + \gamma \theta_j(x', a^*),\ \quad \forall j$
\STATE \textcolor{gray}{\# Compute quantile regression loss (Equation~\ref{eqn:huber_quantile})}
\OUTPUT $\sum_{i=1}^{N} \expect_j \left[ \rho^\kappa_{\hat \tau_i}( \cT \theta_j - \theta_i(x, a)) \right]$
\end{algorithmic}
\end{algorithm}

Unlike $\cfo$, $\qrdqn$ does not require projection onto the approximating distribution's support, instead it is able to expand or contract the values arbitrarily to cover the true range of return values. As an additional advantage, this means that $\qrdqn$ does not require the additional hyper-parameter giving the bounds of the support required by $\cfo$. The only additional hyper-parameter of $\qrdqn$ not shared by $\dqn$ is the number of quantiles $N$, which controls with what resolution we approximate the value distribution. As we increase $N$, $\qrdqn$ goes from $\dqn$ to increasingly able to estimate the upper and lower quantiles of the value distribution. It becomes increasingly capable of distinguishing low probability events at either end of the cumulative distribution over returns. 

\begin{figure*}[ht]
\begin{center}
\includegraphics[width=\textwidth]{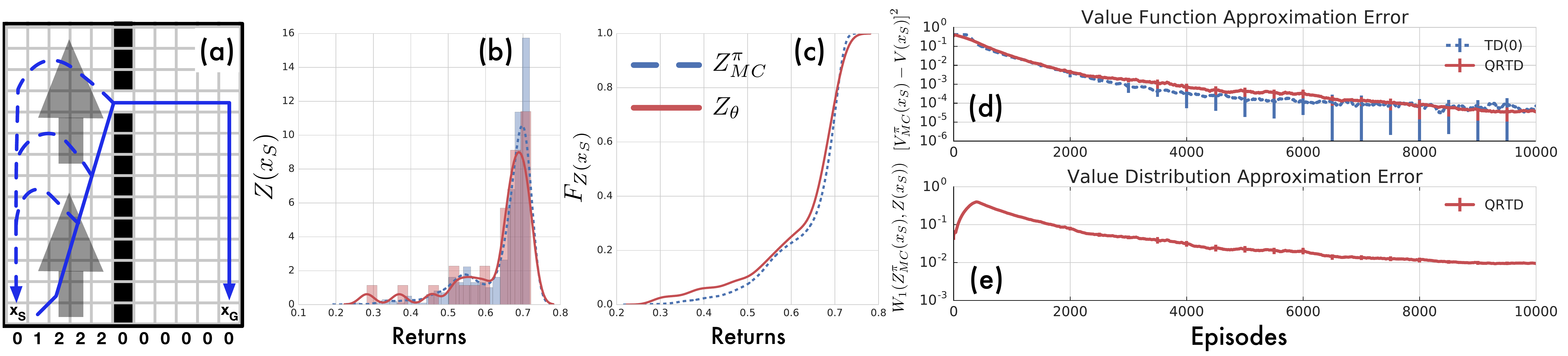}
\end{center}
\caption{(a) Two-room windy gridworld, with wind magnitude shown along bottom row. Policy trajectory shown by blue path, with additional cycles caused by randomness shown by dashed line. (b, c) (Cumulative) Value distribution at start state $x_S$, estimated by MC, $Z^\pi_{MC}$, and by $\qrtd$, $Z_\theta$. (d, e) Value function (distribution) approximation errors for $\td$ and $\qrtd$. \label{fig:windy_gw}}
\end{figure*}

\section{Experimental Results}

In the introduction we claimed that learning the distribution over returns had distinct advantages over learning the value function alone. We have now given theoretically justified algorithms for performing distributional reinforcement learning, $\qrtd$ for policy evaluation and $\qrdqn$ for control. In this section we will empirically validate that the proposed distributional reinforcement learning algorithms: (1) learn the true distribution over returns, (2) show increased robustness during training, and (3) significantly improve sample complexity and final performance over baseline algorithms.

\subsubsection{Value Distribution Approximation Error}

We begin our experimental results by demonstrating that $\qrtd$ actually learns an approximate value distribution that minimizes the $1$-Wasserstein to the ground truth distribution over returns. Although our theoretical results already establish convergence of the former to the latter, the empirical performance helps to round out our understanding.

We use a variant of the classic windy gridworld domain \cite{sutton98reinforcement}, modified to have two rooms and randomness in the transitions. Figure~\ref{fig:windy_gw}(a) shows our version of the domain, where we have combined the transition stochasticity, wind, and the doorway to produce a multi-modal distribution over returns when anywhere in the first room. Each state transition has probability $0.1$ of moving in a random direction, otherwise the transition is affected by wind moving the agent northward. The reward function is zero until reaching the goal state $x_G$, which terminates the episode and gives a reward of $1.0$. The discount factor is $\gamma = 0.99$.

We compute the ground truth value distribution for optimal policy $\pi$, learned by policy iteration, at each state by performing $1K$ Monte-Carlo (MC) rollouts and recording the observed returns as an empirical distribution, shown in Figure~\ref{fig:windy_gw}(b). Next, we ran both $\td$ and $\qrtd$ while following $\pi$ for $10K$ episodes. Each episode begins in the designated start state ($x_S$). Both algorithms started with a learning rate of $\alpha = 0.1$. For $\qrtd$ we used $N = 32$ and drop $\alpha$ by half every $2K$ episodes.

Let $Z^\pi_{MC}(x_S)$ be the MC estimated distribution over returns from the start state $x_S$, similarly $V^\pi_{MC}(x_S)$ its mean. In Figure~\ref{fig:windy_gw} we show the approximation errors at $x_S$ for both algorithms with respect to the number of episodes. In (d) we evaluated, for both $\td$ and $\qrtd$, the squared error, $(V^\pi_{MC} - V(x_S))^2$, and in (e) we show the $1$-Wasserstein metric for $\qrtd$, $W_1(Z^\pi_{MC}(x_S), Z(x_S))$, where $V(x_S)$ and $Z(x_S)$ are the expected returns and value distribution at state $x_S$ estimated by the algorithm. As expected both algorithms converge correctly in mean, and $\qrtd$ minimizes the $1$-Wasserstein distance to $Z^\pi_{MC}$.

\subsection{Evaluation on Atari 2600}

\begin{figure*}[ht]
\begin{center}
\includegraphics[width=\textwidth]{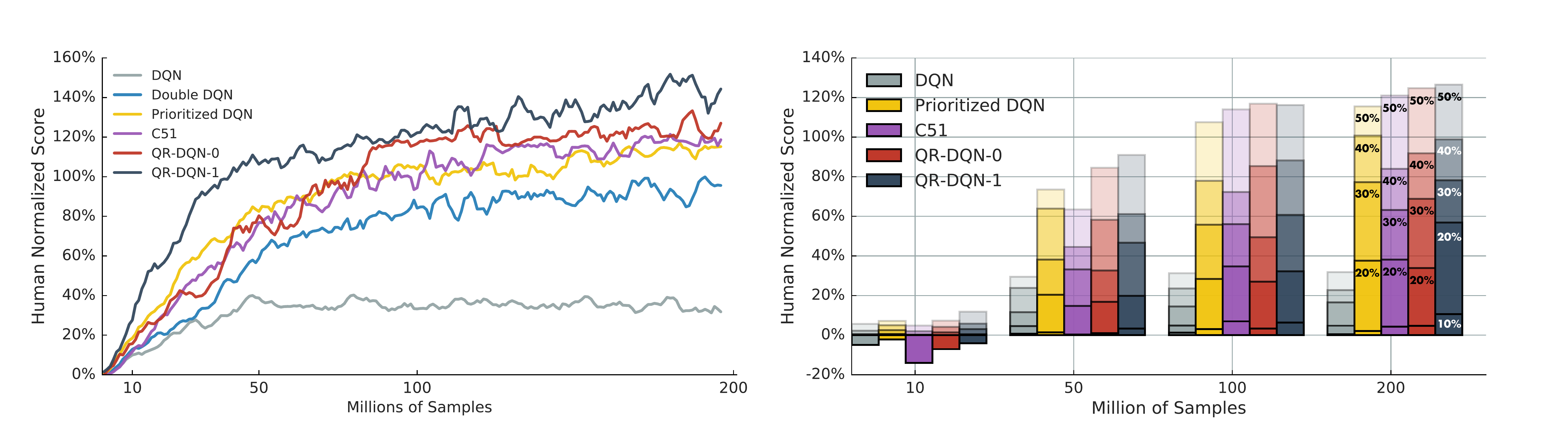}
\end{center}
\caption{Online evaluation results, in human-normalized scores, over 57 Atari 2600 games for 200 million training samples. (Left) Testing performance for one seed, showing median over games. (Right) Training performance, averaged over three seeds, showing percentiles (10, 20, 30, 40, and 50) over games.\label{fig:wdqn_test}}
\end{figure*}

We now provide experimental results that demonstrate the practical advantages of minimizing the Wasserstein metric end-to-end, in contrast to the $\cfo$ approach. We use the 57 Atari 2600 games from the Arcade Learning Environment (ALE) \cite{bellemare13arcade}. Both $\cfo$ and $\qrdqn$ build on the standard $\dqn$ architecture, and we expect both to benefit from recent improvements to $\dqn$ such as the dueling architectures \cite{wang2016dueling} and prioritized replay \cite{schaul16prioritized}. However, in our evaluations we compare the pure versions of $\cfo$ and $\qrdqn$ without these additions. We present results for both a strict quantile loss, $\kappa = 0$ ($\qrdqn$-$0$), and with a Huber quantile loss with $\kappa=1$ ($\qrdqn$-$1$). 

We performed hyper-parameter tuning over a set of five training games and evaluated on the full set of 57 games using these best settings ($\alpha = 0.00005$, $\epsilon_{ADAM} = 0.01/32$, and $N = 200$).\footnote{We swept over $\alpha$ in ($10^{-3}, 5\times 10^{-4}, 10^{-4}, 5 \times 10^{-5}, 10^{-5}$); $\epsilon_{ADAM}$ in ($0.01/32, 0.005/32, 0.001/32$); $N$ ($10, 50, 100, 200$)} As with $\dqn$ we use a target network when computing the distributional Bellman update. We also allow $\epsilon$ to decay at the same rate as in $\dqn$, but to a lower value of $0.01$, as is common in recent work \cite{c51,wang2016dueling,vanhasselt16deep}.

Out training procedure follows that of \citet{mnih15nature}'s, and we present results under two evaluation protocols: \textit{best agent} performance and \textit{online} performance. In both evaluation protocols we consider performance over all 57 Atari 2600 games, and transform raw scores into \textit{human-normalized scores} \cite{vanhasselt16deep}.

\begin{table}[ht]
\begin{center}
\begin{tabular}{ l | r | r | r | r }
\multicolumn{1}{c}{} & \mbox{\textbf{Mean}} & \mbox{\textbf{Median}} & $>$\mbox{\textbf{human}} & $>$\mbox{\textbf{DQN}} \\
\hline
\textsc{dqn}  &   228\% & 79\% & 24 & 0 \\
\textsc{ddqn}   &   307\% & 118\% & 33 & 43 \\
\textsc{Duel.}   &   373\% & 151\% & 37 & 50 \\
\textsc{Prior.}   &   434\% & 124\% & 39 & 48 \\
\textsc{Pr. Duel.}   &   592\% & 172\% & 39 & 44 \\
\hline
\hline
$\cfo$   &   701\% & 178\% & 40 & 50 \\
\hline
$\qrdqn$-$0$   &   881\% & 199\% & 38 & 52 \\
$\qrdqn$-$1$   &   \textbf{\textcolor{blue}{915\%}} & \textbf{\textcolor{blue}{211\%}} & \textbf{\textcolor{blue}{41}} & \textbf{\textcolor{blue}{54}} \\
\end{tabular}
\end{center}
\caption{Mean and median of \textit{best} scores across 57 Atari 2600 games, measured as percentages of human baseline \cite{nair15massively}.}
\label{fig:perc_scores}
\end{table}

\subsubsection{Best agent performance}

To provide comparable results with existing work we report test evaluation results under the best agent protocol. Every one million training frames, learning is frozen and the agent is evaluated for $500K$ frames while recording the average return. Evaluation episodes begin with up to $30$ random no-ops \cite{mnih15nature}, and the agent uses a lower exploration rate ($\epsilon = 0.001$). As training progresses we keep track of the best agent performance achieved thus far.

Table~\ref{fig:perc_scores} gives the best agent performance, at $200$ million frames trained, for $\qrdqn$, $\cfo$, $\dqn$, Double $\dqn$ \cite{vanhasselt16deep}, Prioritized replay \cite{schaul16prioritized}, and Dueling architecture \cite{wang2016dueling}. We see that $\qrdqn$ outperforms all previous agents in mean and median human-normalized score. 

\subsubsection{Online performance}
In this evaluation protocol (Figure~\ref{fig:wdqn_test}) we track the average return attained during each testing (left) and training (right) iteration. For the testing performance we use a single seed for each algorithm, but show online performance with no form of early stopping. For training performance, values are averages over three seeds. Instead of reporting only median performance, we look at the distribution of human-normalized scores over the full set of games. Each bar represents the score distribution at a fixed percentile ($10$th, $20$th, $30$th, $40$th, and $50$th). The upper percentiles show a similar trend but are omitted here for visual clarity, as their scale dwarfs the more informative lower half.

From this, we can infer a few interesting results. (1) Early in learning, most algorithms perform worse than random for at least $10\%$ of games. (2) $\qrtd$ gives similar improvements to sample complexity as prioritized replay, while also improving final performance. (3) Even at $200$ million frames, there are $10\%$ of games where all algorithms reach less than $10\%$ of human. This final point in particular shows us that all of our recent advances continue to be severely limited on a small subset of the Atari 2600 games.

\section{Conclusions}
The importance of the distribution over returns in reinforcement learning has been (re)discovered and highlighted many times by now. In \citet{c51} the idea was taken a step further, and argued to be a central part of approximate reinforcement learning. However, the paper left open the question of whether there exists an algorithm which could bridge the gap between Wasserstein-metric theory and practical concerns.

In this paper we have closed this gap with both theoretical contributions and a new algorithm which achieves state-of-the-art performance in Atari 2600.
There remain many promising directions for future work. Most exciting will be to expand on the promise of a richer policy class, made possible by action-value distributions. We have mentioned a few examples of such policies, often used for risk-sensitive decision making. However, there are many more possible decision policies that consider the action-value distributions as a whole.

Additionally, $\qrdqn$ is likely to benefit from the improvements on $\dqn$ made in recent years. For instance, due to the similarity in loss functions and Bellman operators we might expect that $\qrdqn$ suffers from similar over-estimation biases to those that Double $\dqn$ was designed to address \cite{vanhasselt16deep}. A natural next step would be to combine $\qrdqn$ with the non-distributional methods found in Table~\ref{fig:perc_scores}.

\section*{Acknowledgements}
The authors acknowledge the vital contributions of their colleagues at DeepMind. Special thanks to Tom Schaul, Audrunas Gruslys, Charles Blundell, and Benigno Uria for their early suggestions and discussions on the topic of quantile regression. Additionally, we are grateful for feedback from David Silver, Yee Whye Teh, Georg Ostrovski, Joseph Modayil, Matt Hoffman, Hado van Hasselt, Ian Osband, Mohammad Azar, Tom Stepleton, Olivier Pietquin, Bilal Piot; and a second acknowledgement in particular of Tom Schaul for his detailed review of an previous draft.


\bibliographystyle{aaai}

\newpage

\section*{Appendix}

\subsection{Proofs}

\wonemidpoint*

\begin{proof}
For any $\omega \in [0,1]$, the function $\theta \mapsto |F^{-1}(\omega) - \theta|$ is convex, and has subgradient given by
\[
\theta \mapsto \begin{cases}
1 & \text{\ if\ } \theta < F^{-1}(\omega) \\
[-1, 1] & \text{\ if\ } \theta = F^{-1}(\omega) \\
-1 & \text{\ if\ } \theta > F^{-1}(\omega) \, ,
\end{cases}
\]
so the function $\theta \mapsto \int_{\tau}^{\tau'} |F^{-1}(\omega) - \theta|d\omega$ is also convex, and has subgradient given by
\[
\theta \mapsto \int_\tau^{F(\theta)} -1  d\omega + \int_{F(\theta)}^{\tau'} 1 d\omega \, .
\]
Setting this subgradient equal to $0$ yields
\begin{align}\label{eq:subgradeqn}
(\tau + \tau') - 2F(\theta) = 0 \, ,
\end{align}
and since $F \circ F^{-1}$ is the identity map on $[0,1]$, it is clear that $\theta = F^{-1}((\tau + \tau')/2)$ satisfies Equation \ref{eq:subgradeqn}. Note that in fact any $\theta$ such that $F(\theta) = (\tau + \tau')/2$ yields a subgradient of $0$, which leads to a multitude of minimizers if $F^{-1}$ is not continuous at $(\tau + \tau')/2$.
\end{proof}

\biasedgradients*

\begin{proof}
Write $Z_\theta = \sum_{i=1}^N \frac{1}{N} \delta_{\theta_i}$, with $\theta_1 \leq \cdots \leq \theta_N$. We take $Z$ to be of the same form as $Z_\theta$. Specifically, consider $Z$ given by
\[
Z = \sum_{i=1}^N \frac{1}{N} \delta_i \, ,
\]
supported on the set $\{1,\ldots, N\}$,
and take $m=N$.
Then clearly the unique minimizing $Z_\theta$ for $W_p(Z, Z_\theta)$ is given by taking $Z_\theta = Z$. However, consider the gradient with respect to $\theta_1$ for the objective
\[
\mathbb{E}[W_p(\hat{Z}_N, Z_\theta)] \, .
\]
We have
\[
\nabla_{\theta_1} \mathbb{E}[W_p(\hat{Z}_N, Z_\theta)] |_{\theta_1 = 1}  = \mathbb{E}[\nabla_{\theta_1}W_p(\hat{Z}_N, Z_\theta)|_{\theta_1=1}] \, .
\]
In the event that the sample distribution $\hat{Z}_N$ has an atom at $1$, then the optimal transport plan pairs the atom of $Z_\theta$ at $\theta_1=1$ with this atom of $\hat{Z}_N$, and gradient with respect to $\theta_1$ of $W_p(\hat{Z}_N, Z_\theta)$ is $0$. If the sample distribution $\hat{Z}_N$ does not contain an atom at $1$, then the left-most atom of $\hat{Z}_N$ is greater than $1$ (since $Z$ is supported on $\{1,\ldots, N\}$. In this case, the gradient on $\theta_1$ is negative. Since this happens with non-zero probability, we conclude that 
\[
\nabla_{\theta_1} \mathbb{E}[W_p(\hat{Z}_N, Z_\theta)] |_{\theta_1 = 1} < 0 \, ,
\]
and therefore $Z_\theta = Z$ cannot be the minimizer of $\mathbb{E}[W_p(\hat{Z}_N, Z_\theta)]$.
\end{proof}

\Winftycontract*

\begin{proof}
We assume that instantaneous rewards given a state-action pair are deterministic; the general case is a straightforward generalization. Further, since the operator $\mathcal{T}^\pi$ is a $\gamma$-contraction in $\overline{d}_\infty$, it is sufficient to prove the claim in the case $\gamma = 1$. In addition, since Wasserstein distances are invariant under translation of the support of distributions, it is sufficient to deal with the case where $r(x, a) \equiv 0$ for all $(x, a) \in \mathcal{X} \times \mathcal{A}$. The proof then proceeds by first reducing to the case where every value distribution consists only of single Diracs, and then dealing with this reduced case using Lemma \ref{lem:1DiracCase}.

We write $Z(x, a) = \sum_{k=1}^N \frac{1}{N} \delta_{\theta_k(x, a)}$ and $Y(x, a) = \sum_{k=1}^N \frac{1}{N} \delta_{\psi_k(x, a)}$, for some functions $\theta, \psi : \mathcal{X} \times \mathcal{A} \rightarrow \mathbb{R}^n$. Let $(x ,a)$ be a state-action pair, and let $((x_i, a_i))_{i \in I}$ be all the state-action pairs that are accessible from $(x^\prime, a^\prime)$ in a single transition, where $I$ is a (finite or countable) indexing set. Write $p_i$ for the probability of transitioning from $(x^\prime, a^\prime)$ to $(x_i, a_i)$, for each $i \in I$. We now construct a new MDP and new value distributions for this MDP in which all distributions are given by single Diracs, with a view to applying Lemma \ref{lem:1DiracCase}. The new MDP is of the following form. We take the state-action pair $(x^\prime, a^\prime)$, and define new states, actions, transitions, and a policy $\widetilde{\pi}$, so that the state-action pairs accessible from $(x^\prime, a^\prime)$ in this new MDP are given by $((\widetilde{x}_i^j, \widetilde{a}_i^j)_{i\in I})_{j=1}^N$, and the probability of reaching the state-action pair $(\widetilde{x}_i^j, \widetilde{a}_i^j)$ is $p_i/n$. Further, we define new value distributions $\widetilde{Z}, \widetilde{Y}$ as follows. For each $i \in I$ and $j=1,\ldots,N$, we set:
\begin{align*}
\widetilde{Z}(\widetilde{x}_i^j, \widetilde{a}_i^j) = \delta_{\theta_j(x_i, a_i)} \\
\widetilde{Y}(\widetilde{x}_i^j, \widetilde{a}_i^j) = \delta_{\psi_j(x_i, a_i)} \, .
\end{align*}
The construction is illustrated in Figure \ref{fig:TransformedMDP}.
\begin{figure}
    \centering
    \includegraphics[keepaspectratio,width=0.36\textwidth]{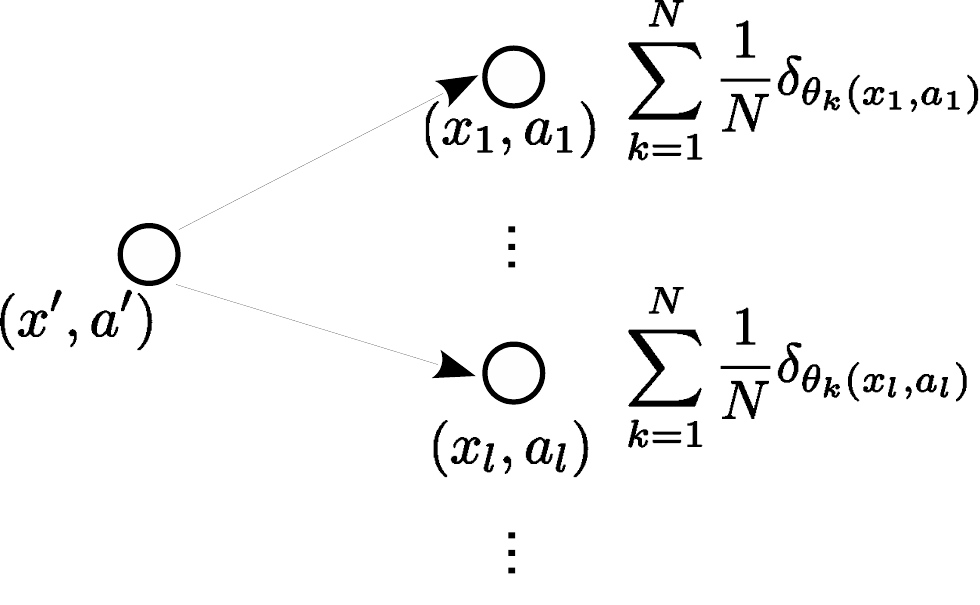}
    \vspace{0.25cm}
    \rule{0.47\textwidth}{0.4pt}
    
    \includegraphics[keepaspectratio,width=0.36\textwidth]{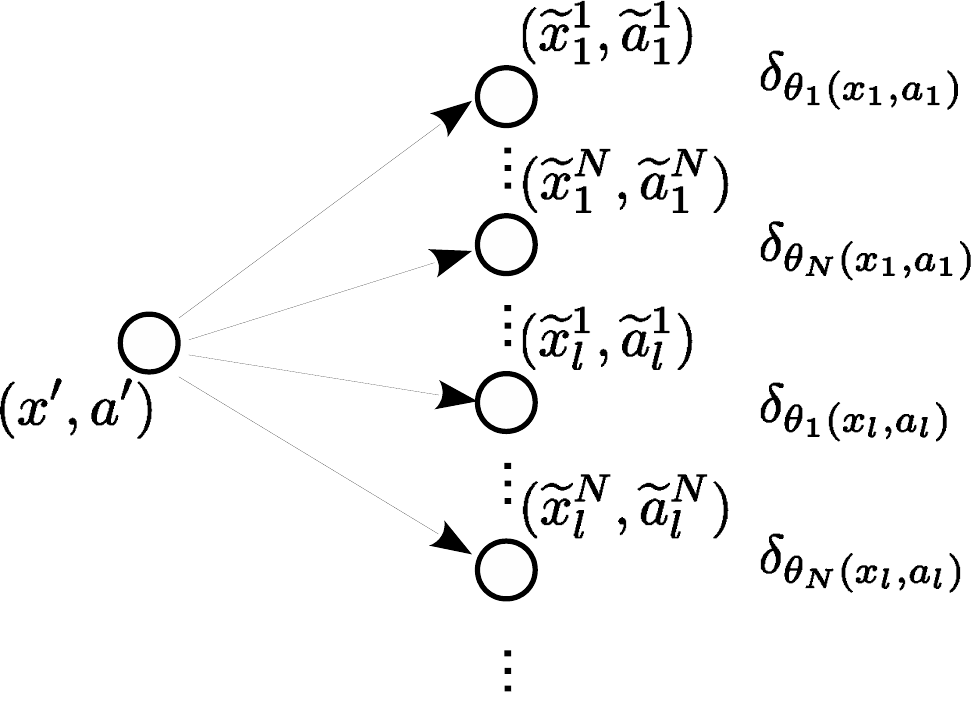}
    \caption{Initial MDP and value distribution $Z$ (top), and transformed MDP and value distribution $\widetilde{Z}$ (bottom).}
    \label{fig:TransformedMDP}
\end{figure}

Since, by Lemma \ref{lem:WinftyIsMaxQuantileDiff}, the $d_\infty$ distance between the 1-Wasserstein projections of two real-valued distributions is the max over the difference of a certain set of quantiles, we may appeal to Lemma \ref{lem:1DiracCase} to obtain the following:
\begin{align}
 & d_\infty(\Pi_{W_1} (\mathcal{T}^{\widetilde{\pi}} \widetilde{Z})(x^\prime, a^\prime), \Pi_{W_1}(\mathcal{T}^{\widetilde{\pi}} \widetilde{Y})(x^\prime, a^\prime) ) \nonumber  \\
 \leq & \sup_{\substack{i=1 \in I\\ j=1,\ldots,N } } |\theta_j(x_i, a_i) - \psi_j(x_i, a_i)| \nonumber\\
 = & \sup_{i=1 \in I} d_\infty(Z(x_i, a_i), Y(x_i, a_i)) \label{eq:ndiracsto1dirac}
\end{align}

Now note that by construction, $(\mathcal{T}^{\widetilde{\pi}} \widetilde{Z})(x^\prime, a^\prime)$ (respectively, $(\mathcal{T}^{\widetilde{\pi}} \widetilde{Y})(x^\prime, a^\prime)$) has the same distribution as $(\mathcal{T}^\pi Z)(x^\prime, a^\prime)$ (respectively, $(\mathcal{T}^\pi Y)(x^\prime, a^\prime)$), and so
\begin{align*}
 d_\infty(\Pi_{W_1} (\mathcal{T}^{\widetilde{\pi}} \widetilde{Z})(x^\prime, a^\prime), \Pi_{W_1}(\mathcal{T}^{\widetilde{\pi}} \widetilde{Y})(x^\prime, a^\prime) ) \\
= d_\infty(\Pi_{W_1} (\mathcal{T}^{\pi} Z)(x^\prime, a^\prime), \Pi_{W_1}(\mathcal{T}^{\pi} Y)(x^\prime, a^\prime) ) \, .
\end{align*}
Therefore, substituting this into the Inequality \ref{eq:ndiracsto1dirac}, we obtain
\begin{align*}
    &d_\infty(\Pi_{W_1} (\mathcal{T}^{\pi} Z)(x^\prime, a^\prime), \Pi_{W_1}(\mathcal{T}^{\pi} Y)(x^\prime, a^\prime) ) \\
    \leq&  \sup_{i \in I} d_\infty(Z(x_i, a_i), Y(x_i, a_i)) \, .
\end{align*}
Taking suprema over the initial state $(x^\prime, a^\prime)$ then yields the result.
\end{proof}

\subsection{Supporting results}

\begin{lem}\label{lem:1DiracCase}Consider an MDP with countable state and action spaces.
Let $Z, Y$ be value distributions such that each state-action distribution $Z(x,a)$, $Y(x, a)$ is given by a single Dirac. Consider the particular case where rewards are identically $0$ and $\gamma=1$, and let $\tau \in [0,1]$. Denote by $\Pi_\tau$ the projection operator that maps a probability distribution onto a Dirac delta located at its $\tau$\textsuperscript{th} quantile. Then
\begin{align*}
\overline{d}_\infty( \Pi_\tau \mathcal{T}^\pi Z, \Pi_\tau \mathcal{T}^\pi Y ) \leq \overline{d}_\infty(Z, Y)
\end{align*}
\end{lem}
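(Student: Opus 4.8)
The plan is to fix a single state--action pair and reduce the claim to a one-dimensional statement about quantiles, exploiting the fact that both $\overline{d}_\infty$ and the statewise projection $\Pi_\tau$ decouple across state--action pairs. First I would write $Z(x,a) = \delta_{f(x,a)}$ and $Y(x,a) = \delta_{g(x,a)}$ for real-valued functions $f,g$, and set $\varepsilon := \overline{d}_\infty(Z,Y) = \sup_{x,a} |f(x,a) - g(x,a)|$, using that the $\infty$-Wasserstein distance between two Diracs is the absolute difference of their locations. Fixing an arbitrary initial pair $(x',a')$, I let $(x_i,a_i)_{i \in I}$ enumerate the next-step pairs accessible in one transition, with $p_i$ the corresponding transition--policy probabilities. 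Since rewards vanish and $\gamma = 1$, the Bellman target $\mathcal{T}^\pi Z(x',a')$ is exactly the mixture $\sum_{i} p_i \delta_{f(x_i,a_i)}$, and likewise $\mathcal{T}^\pi Y(x',a') = \sum_{i} p_i \delta_{g(x_i,a_i)}$.

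The heart of the argument is that these two mixtures are matched atom-for-atom with equal mass: pairing $f(x_i,a_i)$ with $g(x_i,a_i)$ defines a coupling under which the two supports differ by at most $\varepsilon$ pointwise. I would turn this into a quantile bound at the level of CDFs, which is the most robust way to handle the atoms of the mixtures. Writing $F_Z, F_Y$ for the CDFs of the two targets, the inequality $g(x_i,a_i) \le f(x_i,a_i) + \varepsilon$ gives the inclusion $\{i : f(x_i,a_i) \le t\} \subseteq \{i : g(x_i,a_i) \le t + \varepsilon\}$ for every $t$, hence $F_Y(t+\varepsilon) \ge F_Z(t)$. Applying this with $t = F_Z^{-1}(\tau)$ together with the $\inf$-definition \eqnref{quantile} of the quantile yields $F_Y^{-1}(\tau) \le F_Z^{-1}(\tau) + \varepsilon$; the symmetric inequality follows by swapping the roles of $Z$ and $Y$, giving $|F_Z^{-1}(\tau) - F_Y^{-1}(\tau)| \le \varepsilon$. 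Equivalently, one can observe that the coupling shows $W_\infty(\mathcal{T}^\pi Z(x',a'), \mathcal{T}^\pi Y(x',a')) \le \varepsilon$ and then invoke the quantile characterization of $W_\infty$ from the footnote to read off the bound at $\omega = \tau$.

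Since $\Pi_\tau$ returns the Dirac located at the $\tau$th quantile and the $d_\infty$ distance between two Diracs is the gap between their locations, this gives $d_\infty(\Pi_\tau \mathcal{T}^\pi Z(x',a'), \Pi_\tau \mathcal{T}^\pi Y(x',a')) \le \varepsilon$. Taking the supremum over $(x',a')$ then produces $\overline{d}_\infty(\Pi_\tau \mathcal{T}^\pi Z, \Pi_\tau \mathcal{T}^\pi Y) \le \varepsilon = \overline{d}_\infty(Z,Y)$, as required. The one step needing care---and the main obstacle---is the quantile comparison when the mixture CDFs jump across the level $\tau$ (ties among the $f(x_i,a_i)$ or $g(x_i,a_i)$, or countably many atoms accumulating): the $\inf$-based definition of $F^{-1}$ together with the monotone inclusion $F_Y(\cdot + \varepsilon) \ge F_Z(\cdot)$ is precisely what makes the bound go through without any continuity assumption on the CDFs.
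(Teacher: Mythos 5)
Your proof is correct, and it follows the same overall reduction as the paper's: fix a state--action pair, write both Bellman targets as atom-matched mixtures $\sum_i p_i \delta_{\theta_i}$ and $\sum_i p_i \delta_{\psi_i}$, show that their $\tau$-th quantiles differ by at most $\sup_i |\theta_i - \psi_i|$, and take suprema over the initial pair. Where you differ is in how that one-dimensional quantile comparison is established. The paper picks indices $u, v$ with $\theta_u, \psi_v$ equal to the respective quantiles and rules out $|\theta_u - \psi_v| > |\theta_i - \psi_i|$ for all $i$ by contradiction, partitioning the index set and counting probability mass. You instead prove the monotone CDF-shift inequality $F_Y(t + \varepsilon) \ge F_Z(t)$ directly from the pointwise bound $|\theta_i - \psi_i| \le \varepsilon$ and read off $|F_Z^{-1}(\tau) - F_Y^{-1}(\tau)| \le \varepsilon$ from the $\inf$-definition of the quantile. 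The two arguments carry the same content, but yours is slightly more robust: the paper's proof tacitly assumes that the $\tau$-th quantile of each countable mixture is attained at one of its atoms, which can fail when atoms accumulate, whereas your CDF-level argument needs no such assumption and handles ties and jumps uniformly. Your closing remark that the atom-for-atom coupling bounds $W_\infty$ of the two targets by $\varepsilon$, after which the quantile bound is immediate from the quantile characterization of $W_\infty$, is also a valid and arguably cleaner way to package the same step.
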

\begin{proof}
Let $Z(x, a) = \delta_{\theta(x, a)}$ and $Y(x, a) = \delta_{\psi(x, a)}$ for each state-action pair $(x, a) \in \mathcal{X} \times \mathcal{A}$, for some functions $\psi, \theta : \mathcal{X} \times \mathcal{A} \rightarrow \mathbb{R}$. Let $(x^\prime, a^\prime)$ be a state-action pair, and let $((x_i, a_i))_{i \in I}$ be all the state-action pairs that are accessible from $(x^\prime, a^\prime)$ in a single transition, with $I$ a (finite or countably infinite) indexing set.
To lighten notation, we write $\theta_i$ for $\theta(x_i, a_i)$ and $\psi_i$ for $\psi(x_i, a_i)$. Further, let the probability of transitioning from $(x^\prime, a^\prime)$ to $(x_i, a_i)$ be $p_i$, for all $i\in I$.

Then we have
\begin{align}
(\mathcal{T}^\pi Z)(x^\prime, a^\prime) = \sum_{i \in I} p_i \delta_{\theta_i} \\
(\mathcal{T}^\pi Y)(x^\prime, a^\prime) = \sum_{i \in I} p_i \delta_{\psi_i} \, .
\end{align}
Now consider the $\tau$\textsuperscript{th} quantile of each of these distributions, for $\tau \in [0,1]$ arbitrary. Let $u \in I$ be such that $\theta_u$ is equal to this quantile of $(\mathcal{T}^\pi Z)(x^\prime, a^\prime)$, and let $v \in I$ such that $\psi_v$ is equal to this quantile of $(\mathcal{T}^\pi Y)(x^\prime, a^\prime)$. Now note that
\[
d_\infty( \Pi_\tau \mathcal{T}^\pi Z(x^\prime, a^\prime), \Pi_\tau \mathcal{T}^\pi Y(x^\prime, a^\prime) ) = |\theta_u - \psi_v| 
\]
We now show that 
\begin{align}\label{eq:contradictthis}
|\theta_u -\psi_v | > |\theta_i - \psi_i|\ \ \ \forall i \in I
\end{align}
is impossible, from which it will follow that
\[
d_\infty( \Pi_\tau \mathcal{T}^\pi Z(x^\prime, a^\prime), \Pi_\tau \mathcal{T}^\pi Y(x^\prime, a^\prime) )  \leq \overline{d}_\infty(Z, Y) \, ,
\]
and the result then follows by taking maxima over state-action pairs $(x^\prime, a^\prime)$.
To demonstrate the impossibility of \eqref{eq:contradictthis}, 
without loss of generality we take $\theta_u \leq \psi_v$. 

We now introduce the following partitions of the indexing set $I$. Define:
\begin{align*}
I_{\leq \theta_u} = \{ i \in I | \theta_i \leq \theta_u \} \, ,\\
I_{> \theta_u} = \{ i \in I | \theta_i > \theta_u \} \, , \\
I_{< \psi_v} = \{ i \in I | \psi_i < \psi_v \} \, , \\
I_{\geq \psi_v} = \{ i \in I | \psi_i \geq \psi_v \} \, ,
\end{align*}
and observe that we clearly have the following disjoint unions:
\begin{align*}
I = I_{\leq \theta_u} \cup I_{> \theta_u} \, , \\
I = I_{< \psi_v} \cup  I_{\geq \psi_v} \, .
\end{align*}
If \eqref{eq:contradictthis} is to hold, then we must have $I_{\leq \theta_u} \cap I_{\geq \psi_v}  = \emptyset$. Therefore, we must have $I_{\leq \theta_u} \subseteq I_{< \psi_v}$. But if this is the case, then since $\theta_u$ is the $\tau$\textsuperscript{th} quantile of $(\mathcal{T}^\pi Z)(x^\prime, a^\prime)$, we must have
\[
\sum_{i \in I_{\leq \theta_u}} p_i \geq \tau \, ,
\]
and so consequently
\[
\sum_{i \in I_{< \psi_v}} p_i \geq \tau \, ,
\]
from which we conclude that the $\tau$\textsuperscript{th} quantile of $(\mathcal{T}^\pi Y)(x^\prime, a^\prime)$ is less than $\psi_v$, a contradiction. Therefore \eqref{eq:contradictthis} cannot hold, completing the proof.
\end{proof}

\begin{lem}\label{lem:WinftyIsMaxQuantileDiff}
For any two probability distributions $\nu_1, \nu_2$ over the real numbers, and the Wasserstein projection operator $\Pi_{W_1}$ that projects distributions onto support of size $n$, we have that
\begin{align*}
& d_\infty(\Pi_{W_1} \nu_1, \Pi_{W_1} \nu_2) \\
= & \max_{i=1,\ldots,n} \left|F_{\nu_1}^{-1}\left(\frac{2i-1}{2n}\right) - F_{\nu_2}^{-1}\left(\frac{2i-1}{2n}\right)\right| \, .
\end{align*}
\end{lem}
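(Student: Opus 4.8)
The plan is to unfold both sides of the claimed identity into fully explicit expressions and then match them term by term. The left-hand side is an $\infty$-Wasserstein distance, which by the characterisation recalled in the footnote equals $\sup_{\omega} |F^{-1}_{\Pi_{W_1}\nu_1}(\omega) - F^{-1}_{\Pi_{W_1}\nu_2}(\omega)|$, so the whole problem reduces to understanding the inverse CDFs of the two projected distributions.

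First I would make the projection explicit. By Lemma~\ref{w1_midpoint}, applied with the uniform grid $\tau_i = i/n$, the $W_1$-projection of a distribution $\nu$ onto a support of size $n$ is the uniform distribution
\[
\Pi_{W_1}\nu = \frac{1}{n}\sum_{i=1}^n \delta_{F_\nu^{-1}(\hat{\tau}_i)}, \qquad \hat{\tau}_i = \frac{2i-1}{2n},
\]
placing one atom at each quantile midpoint. Because $F_\nu^{-1}$ is non-decreasing, the atoms are automatically sorted: $F_\nu^{-1}(\hat{\tau}_1) \le \cdots \le F_\nu^{-1}(\hat{\tau}_n)$.

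Second I would compute the inverse CDF of such a finite uniform distribution. For any sorted points $c_1 \le \cdots \le c_n$, the uniform law $\tfrac1n\sum_i \delta_{c_i}$ has a CDF that increases by $1/n$ at each $c_i$, so from the definition \eqnref{quantile} its inverse CDF is the step function equal to $c_i$ on the cell $((i-1)/n, i/n]$. Applying this to $\nu_1$ and $\nu_2$ and using the sortedness from the previous step gives, for every $\omega \in ((i-1)/n, i/n]$,
\[
F^{-1}_{\Pi_{W_1}\nu_1}(\omega) = F_{\nu_1}^{-1}(\hat{\tau}_i), \qquad F^{-1}_{\Pi_{W_1}\nu_2}(\omega) = F_{\nu_2}^{-1}(\hat{\tau}_i).
\]
Crucially, on each cell both step functions are constant and indexed by the same quantile level $\hat{\tau}_i$.

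Finally I would assemble the two pieces: since the integrand $|F^{-1}_{\Pi_{W_1}\nu_1}(\omega) - F^{-1}_{\Pi_{W_1}\nu_2}(\omega)|$ is constant and equal to $|F_{\nu_1}^{-1}(\hat{\tau}_i) - F_{\nu_2}^{-1}(\hat{\tau}_i)|$ on each of the $n$ cells partitioning $(0,1]$, the supremum over $\omega$ is exactly the maximum of these $n$ values, which is the right-hand side. The only point requiring genuine care — and the step I would treat as the main obstacle — is the bookkeeping that the $i$-th atom of $\Pi_{W_1}\nu_1$ is compared with the $i$-th atom of $\Pi_{W_1}\nu_2$; this alignment is precisely what the monotonicity of the quantile function guarantees, and it would fail under any other pairing of atoms. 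The remaining technicalities (the single boundary point $\omega=0$, where the inverse CDF is formally $-\infty$, and possible ties among the $c_i$) affect only a measure-zero set and hence leave the supremum unchanged.
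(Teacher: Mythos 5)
Your proof is correct and follows essentially the same route as the paper's: both identify $\Pi_{W_1}\nu_k$ as the uniform distribution on the quantile midpoints $F_{\nu_k}^{-1}\big(\tfrac{2i-1}{2n}\big)$ via Lemma~\ref{w1_midpoint}, and then reduce $d_\infty$ to the maximum over matched atoms. The paper phrases the second step as the optimality of the monotone coupling, whereas you unfold the inverse CDFs of the projected distributions into step functions and take the supremum cell by cell --- the same fact, just spelled out more explicitly.
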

\begin{proof}
By the discussion surrounding Lemma \ref{w1_midpoint}, we have that $\Pi_{W_1} \nu_k = \sum_{i=1}^n \frac{1}{n} \delta_{F^{-1}_{\nu_k}(\frac{2i-1}{2n})}$ for $k=1,2$. Therefore, the optimal coupling between $\Pi_{W_1} \nu_1$ and $\Pi_{W_1} \nu_2$ must be given by $F^{-1}_{\nu_1}(\frac{2i-1}{2n}) \mapsto F^{-1}_{\nu_2}(\frac{2i-1}{2n})$ for each $i=1,\ldots, n$. This immediately leads to the expression of the lemma.
\end{proof}

\subsection{Further theoretical results}

\begin{lem}\label{lem:dpnocontraction}
The projected Bellman operator $\Pi_{W_1}\mathcal{T}^\pi$ is in general not a non-expansion in $\overline{d}_p$, for $p \in [1, \infty)$.
\end{lem}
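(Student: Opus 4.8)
The plan is to disprove the non-expansion property by exhibiting an explicit MDP and a pair of value distributions $Z_1, Z_2$ for which $\overline{d}_p(\Pi_{W_1}\cTpi Z_1, \Pi_{W_1}\cTpi Z_2) > \overline{d}_p(Z_1, Z_2)$. First I would record the $W_p$-analogue of Lemma~\ref{lem:WinftyIsMaxQuantileDiff}: by the same optimal-coupling argument (the monotone coupling pairing the $i$-th smallest atoms of each projection), for any two distributions $\nu_1, \nu_2$ the projections onto $N$ atoms satisfy
\[
W_p(\Pi_{W_1}\nu_1, \Pi_{W_1}\nu_2)^p = \frac{1}{N}\sum_{i=1}^N \left| F^{-1}_{\nu_1}(\hat{\tau}_i) - F^{-1}_{\nu_2}(\hat{\tau}_i) \right|^p .
\]
I would take $N = 2$, a single source state $x'$ whose only action leads to two successor states $s_1, s_2$, each with probability $1/2$, with zero reward, and set $Z_1 = Z_2$ everywhere except on $s_1, s_2$, so that the input distance is controlled solely by these two successors.

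For the explicit instance (taking $\gamma = 1$ momentarily) set $Z_1(s_1) = \tfrac12\delta_0 + \tfrac12\delta_5$ and $Z_1(s_2) = \tfrac12\delta_{10} + \tfrac12\delta_{15}$, and perturb a single atom in each successor, $Z_2(s_1) = \tfrac12\delta_2 + \tfrac12\delta_5$ and $Z_2(s_2) = \tfrac12\delta_{12} + \tfrac12\delta_{15}$. Each per-state distance is diluted, $W_p(Z_1(s_k), Z_2(s_k)) = 2\cdot 2^{-1/p}$ for $k=1,2$, so $\overline{d}_p(Z_1, Z_2) = 2^{1 - 1/p}$. The Bellman images are the uniform mixtures on $\{0,5,10,15\}$ and $\{2,5,12,15\}$; the quantile projection at $\hat{\tau}_1 = 1/4$ and $\hat{\tau}_2 = 3/4$ selects the first and third order statistics, namely $\{0, 10\}$ and $\{2, 12\}$, so every selected quantile has moved by exactly $2$ and $W_p(\Pi_{W_1}\cTpi Z_1(x'), \Pi_{W_1}\cTpi Z_2(x')) = 2$. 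Hence the ratio of output to input distance is $2 / 2^{1-1/p} = 2^{1/p} > 1$ for every $p \in [1, \infty)$. The mechanism is precisely the failure of the $L^p$ average to commute with mixing: within each successor the two atom-differences are maximally unequal ($2$ and $0$), so the $p$-average shrinks them, whereas after mixing the single displacement is spread equally across both selected quantiles, defeating that shrinkage.

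Finally I would reinstate a genuine discount factor $\gamma \in [0,1)$. Scaling the supports multiplies the output distance by $\gamma$ while leaving the input distance unchanged, so the construction already refutes non-expansion whenever $\gamma > 2^{-1/p}$. To cover arbitrary $\gamma$ for a fixed $p$, I would generalize to $N$ successor states each carrying an $N$-atom distribution placed in well-separated value blocks, displacing exactly one atom per state by $D$ so that the $N$ selected quantiles of the mixture each shift by $D$: the input distance stays diluted at $D N^{-1/p}$ while the output distance is $\gamma D$, giving a ratio $\gamma N^{1/p}$ that exceeds $1$ once $N$ is large enough. The main obstacle is this engineering step — arranging the blocks so that each selected quantile of the mixture lands on a distinct displaced atom (separation prevents the order statistics from reshuffling under the perturbation) while each per-state Wasserstein distance remains diluted. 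This is exactly what the $\max$ in the $\infty$-case is insensitive to, which both explains why $p < \infty$ is special and is consistent with the ratio $N^{1/p} \to 1$ as $p \to \infty$.
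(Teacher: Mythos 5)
Your proposal is correct and follows essentially the same route as the paper's proof: a one-step MDP with two successor states, two-atom distributions with a single displaced atom per successor, and the observation that the quantile projection of the mixture picks up both displaced atoms, amplifying $\overline{d}_p$ by a factor of $2^{1/p}$ (the paper uses transition probabilities $2/3,1/3$ and supports $\{0,2\},\{3,5\}$ versus your $1/2,1/2$ and $\{0,5\},\{10,15\}$, but the mechanism is identical). Your additional care about reinstating $\gamma<1$ — noting the example only refutes non-expansion for $\gamma>2^{-1/p}$ and sketching an $N$-successor generalization — goes beyond the paper, which simply sets $\gamma=1$.
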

\begin{proof}
Consider the case where the number of Dirac deltas in each distribution, $N$, is equal to $2$, and let $\gamma=1$. We consider an MDP with a single initial state, $x$, and two terminal states, $x_1$ and $x_2$. We take the action space of the MDP to be trivial, and therefore omit it in the notation that follows. Let the MDP have a $2/3$ probability of transitioning from $x$ to $x_1$, and $1/3$ probability of transitioning from $x$ to $x_2$. We take all rewards in the MDP to be identically $0$. Further, consider two value distributions, $Z$ and $Y$, given by:
\begin{align*}
Z(x_1) = \frac{1}{2}\delta_0 + \frac{1}{2} \delta_2 \, , &\ \ Y(x_1) = \frac{1}{2}\delta_1 + \frac{1}{2} \delta_2 \, , \\
Z(x_2) = \frac{1}{2}\delta_3 + \frac{1}{2} \delta_5 \, , &\ \ Y(x_2) = \frac{1}{2}\delta_4 + \frac{1}{2} \delta_5 \, , \\
Z(x) = \delta_0 \, ,& \ \ Y(x) = \delta_0 \, .
\end{align*}
Then note that we have
\begin{align*}
d_p(Z(x_1), Y(x_1)) = \left(\frac{1}{2}|1 - 0|\right)^{1/p} = \frac{1}{2^{1/p}} \, , \\
d_p(Z(x_2), Y(x_2)) = \left(\frac{1}{2}|4 - 3|\right)^{1/p} = \frac{1}{2^{1/p}} \, , \\
d_p(Z(x), Y(x)) = 0 \, ,
\end{align*}
and so
\[
\overline{d}_p(Z, Y) = \frac{1}{2^{1/p}} \, .
\]
We now consider the projected backup for these two value distributions at the state $x$. We first compute the full backup:
\begin{align*}
    (\mathcal{T}^\pi Z)(x) = \frac{1}{3} \delta_0 + \frac{1}{3} \delta_2 + \frac{1}{6} \delta_3 + \frac{1}{6} \delta_5 \, ,\\
    (\mathcal{T}^\pi Y)(x) = \frac{1}{3} \delta_1 + \frac{1}{3} \delta_2 + \frac{1}{6} \delta_4 + \frac{1}{6} \delta_5 \, .
\end{align*}
Appealing to Lemma \ref{w1_midpoint}, we note that when projected these distributions onto two equally-weighted Diracs, the locations of these Diracs correspond to the 25\% and 75\% quantiles of the original distributions. We therefore have
\begin{align*}
    (\Pi_{W_1} \mathcal{T}^\pi Z)(x) = \frac{1}{2} \delta_0 + \frac{1}{2} \delta_3 \, ,\\
    (\Pi_{W_1} \mathcal{T}^\pi Y)(x) = \frac{1}{2} \delta_1 + \frac{1}{2} \delta_4 \, ,
\end{align*}
and we therefore obtain
\begin{align*}
\overline{d}_1(\Pi_{W_1} \mathcal{T}^\pi Z, \Pi_{W_1} \mathcal{T}^\pi Y) 
= & \left(\frac{1}{2}(|1-0|^p + |4-3|^p) \right)^{1/p} \\
= & 1 > \frac{1}{2^{1/p}} =  \overline{d}_1(Z, Y) \, ,
\end{align*}
completing the proof.
\end{proof}

\subsection{Notation}
Human-normalized scores are given by \cite{vanhasselt16deep},
\begin{equation}
    \nonumber score = \frac{agent - random}{human - random},
\end{equation}
where $agent$, $human$ and $random$ represent the per-game raw scores for the agent, human baseline, and random agent baseline.

\begin{table}[ht]
\centering
\caption{Notation used in the paper}
\label{my-label}
\begin{tabular}{lll}
  Symbol        &  Description of usage\\
  \hline \\
                & Reinforcement Learning \\
  \hline
  $\cM$         &  MDP ($\cX$, $\cA$, $R$, $P$, $\gamma$) \\
  $\cX$         &  State space of MDP \\
  $\cA$         &  Action space of MDP \\
  $R$, $R_t$    &  Reward function, random variable reward \\
  $P$           &  Transition probabilities, $P(x' | x, a)$ \\
  $\gamma$      &  Discount factor, $\gamma \in [0, 1)$ \\
  $x,x_t \in \cX$ & States \\
  $a, a^*, b \in \cA$ & Actions \\
  $r, r_t \in R$    &   Rewards\\
  $\pi$         & Policy \\
  $\cT^\pi$     & (dist.) Bellman operator \\
  $\cT$         & (dist.) Bellman optimality operator \\
  $V^\pi$, $V$  & Value function, state-value function \\
  $Q^\pi$, $Q$  & Action-value function \\
  $\alpha$      & Step-size parameter, learning rate \\
  $\epsilon$    & Exploration rate, $\epsilon$-greedy \\
  $\epsilon_{ADAM}$ & Adam parameter \\
  $\kappa$      & Huber-loss parameter  \\
  $\cL_\kappa$  & Huber-loss with parameter $\kappa$ \\
  \hline \\
                & Distributional Reinforcement Learning \\
  \hline
  $Z^\pi$, $Z$  & Random return, value distribution \\
  $Z^\pi_{MC}$  & Monte-Carlo value distribution under policy $\pi$ \\
  $\cZ$         & Space of value distributions \\
  $\hat{Z}^\pi$ & Fixed point of convergence for $\Pi_{W_1} \cT^\pi$ \\
  $z \sim Z$    & Instantiated return sample \\
  $p$           & Metric order \\
  $W_p$         & $p$-Wasserstein metric \\
  $L^p$         & Metric order $p$ \\
  $\dip$        & maximal form of Wasserstein \\
  $\Phi$        & Projection used by $\cfo$\\
  $\Pi_{W_1}$   & $1$-Wasserstein projection \\
  $\rho_\tau$   & Quantile regression loss \\
  $\rho_\tau^\kappa$ & Huber quantile loss \\
  $q_1,\ldots,q_N$ & Probabilities, parameterized probabilities \\
  $\tau_0, \tau_1, \ldots, \tau_N$ & Cumulative probabilities with $\tau_0 := 0$\\
  $\hat{\tau}_1, \ldots, \hat{\tau}_N$ & Midpoint quantile targets\\
  $\omega$      & Sample from unit interval \\
  $\delta_z$    & Dirac function at $z \in \mathbb{R}$\\
  $\theta$      & Parameterized function \\
  $B$           & Bernoulli distribution \\
  $B_\mu$     & Parameterized Bernoulli distribution \\
  $\qcZ$        & Space of quantile (value) distributions \\
  $Z_\theta$    & Parameterized quantile (value) distribution \\
  $Y$           & Random variable over $\mathbb{R}$\\
  $Y_1, \ldots, Y_m$ & Random variable samples\\
  $\hat{Y}_m$   & Empirical distribution from $m$-Diracs
\end{tabular}
\end{table}

\begin{figure*}[t]
\begin{center}
\includegraphics[width=\textwidth]{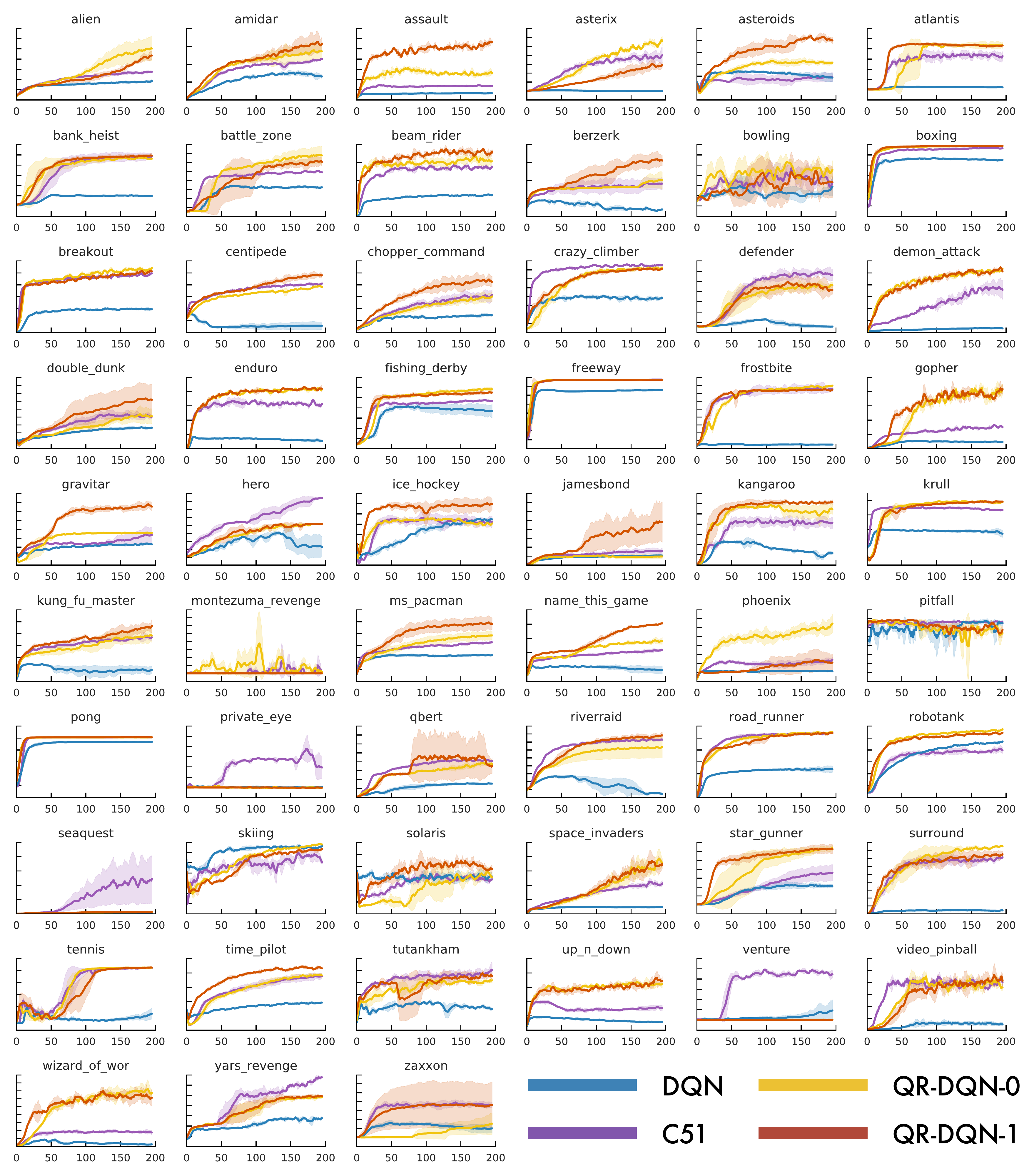}
\end{center}
\caption{Online training curves for $\dqn$, $\cfo$, and $\qrdqn$ on 57 Atari 2600 games. Curves are averages over three seeds, smoothed over a sliding window of 5 iterations, and error bands give standard deviations.\label{fig:all_games}}
\end{figure*}

\newpage

\begin{figure*}
\small
\centering
\begin{tabular}{ l | r|r|r|r|r|r| r }
  \textbf{\textsc{games}}  &  \textbf{\textsc{random}}  &  \textbf{\textsc{human}}  &  \textbf{\textsc{dqn}}  &   \textbf{\textsc{prior.}} \textbf{\textsc{duel.}}  &  \textbf{\textsc{c51}} & \textbf{\textsc{qr-dqn-0}} & \textbf{\textsc{qr-dqn-1}}\\
\hline
Alien & 227.8 & 7,127.7 & 1,620.0 & 3,941.0 & 3,166 & \textbf{\textcolor{blue}{9,983}} & 4,871 \\
Amidar & 5.8 & 1,719.5 & 978.0 & 2,296.8 & 1,735 & \textbf{\textcolor{blue}{2,726}} & 1,641 \\
Assault & 222.4 & 742.0 & 4,280.4 & 11,477.0 & 7,203 & 19,961 & \textbf{\textcolor{blue}{22,012}} \\
Asterix & 210.0 & 8,503.3 & 4,359.0 & 375,080.0 & 406,211 & \textbf{\textcolor{blue}{454,461}} & 261,025 \\
Asteroids & 719.1 & \textbf{\textcolor{blue}{47,388.7}} & 1,364.5 & 1,192.7 & 1,516 & 2,335 & 4,226 \\
Atlantis & 12,850.0 & 29,028.1 & 279,987.0 & 395,762.0 & 841,075 & \textbf{\textcolor{blue}{1,046,625}} & 971,850 \\
Bank Heist & 14.2 & 753.1 & 455.0 & \textbf{\textcolor{blue}{1,503.1}} & 976 & 1,245 & 1,249 \\
Battle Zone & 2,360.0 & 37,187.5 & 29,900.0 & 35,520.0 & 28,742 & 35,580 & \textbf{\textcolor{blue}{39,268}} \\
Beam Rider & 363.9 & 16,926.5 & 8,627.5 & 30,276.5 & 14,074 & 24,919 & \textbf{\textcolor{blue}{34,821}} \\
Berzerk & 123.7 & 2,630.4 & 585.6 & 3,409.0 & 1,645 & \textbf{\textcolor{blue}{34,798}} & 3,117 \\
Bowling & 23.1 & \textbf{\textcolor{blue}{160.7}} & 50.4 & 46.7 & 81.8 & 85.3 & 77.2 \\
Boxing & 0.1 & 12.1 & 88.0 & 98.9 & 97.8 & 99.8 & \textbf{\textcolor{blue}{99.9}} \\
Breakout & 1.7 & 30.5 & 385.5 & 366.0 & 748 & \textbf{\textcolor{blue}{766}} & 742 \\
Centipede & 2,090.9 & 12,017.0 & 4,657.7 & 7,687.5 & 9,646 & 9,163 & \textbf{\textcolor{blue}{12,447}} \\
Chopper Command & 811.0 & 7,387.8 & 6,126.0 & 13,185.0 & \textbf{\textcolor{blue}{15,600}} & 7,138 & 14,667 \\
Crazy Climber & 10,780.5 & 35,829.4 & 110,763.0 & 162,224.0 & 179,877 & \textbf{\textcolor{blue}{181,233}} & 161,196 \\
Defender & 2,874.5 & 18,688.9 & 23,633.0 & 41,324.5 & 47,092 & 42,120 & \textbf{\textcolor{blue}{47,887}} \\
Demon Attack & 152.1 & 1,971.0 & 12,149.4 & 72,878.6 & \textbf{\textcolor{blue}{130,955}} & 117,577 & 121,551 \\
Double Dunk & -18.6 & -16.4 & -6.6 & -12.5 & 2.5 & 12.3 & \textbf{\textcolor{blue}{21.9}} \\
Enduro & 0.0 & 860.5 & 729.0 & 2,306.4 & \textbf{\textcolor{blue}{3,454}} & 2,357 & 2,355 \\
Fishing Derby & -91.7 & -38.7 & -4.9 & \textbf{\textcolor{blue}{41.3}} & 8.9 & 37.4 & 39.0 \\
Freeway & 0.0 & 29.6 & 30.8 & 33.0 & 33.9 & \textbf{\textcolor{blue}{34.0}} & \textbf{\textcolor{blue}{34.0}} \\
Frostbite & 65.2 & 4,334.7 & 797.4 & \textbf{\textcolor{blue}{7,413.0}} & 3,965 & 4,839 & 4,384 \\
Gopher & 257.6 & 2,412.5 & 8,777.4 & 104,368.2 & 33,641 & \textbf{\textcolor{blue}{118,050}} & 113,585 \\
Gravitar & 173.0 & \textbf{\textcolor{blue}{3,351.4}} & 473.0 & 238.0 & 440 & 546 & 995 \\
H.E.R.O. & 1,027.0 & 30,826.4 & 20,437.8 & 21,036.5 & \textbf{\textcolor{blue}{38,874}} & 21,785 & 21,395 \\
Ice Hockey & -11.2 & \textbf{\textcolor{blue}{0.9}} & -1.9 & -0.4 & -3.5 & -3.6 & -1.7 \\
James Bond & 29.0 & 302.8 & 768.5 & 812.0 & 1,909 & 1,028 & \textbf{\textcolor{blue}{4,703}} \\
Kangaroo & 52.0 & 3,035.0 & 7,259.0 & 1,792.0 & 12,853 & 14,780 & \textbf{\textcolor{blue}{15,356}} \\
Krull & 1,598.0 & 2,665.5 & 8,422.3 & 10,374.4 & 9,735 & 11,139 & \textbf{\textcolor{blue}{11,447}} \\
Kung-Fu Master & 258.5 & 22,736.3 & 26,059.0 & 48,375.0 & 48,192 & 71,514 & \textbf{\textcolor{blue}{76,642}} \\
Montezuma's Revenge & 0.0 & \textbf{\textcolor{blue}{4,753.3}} & 0.0 & 0.0 & 0.0 & 75.0 & 0.0 \\
Ms. Pac-Man & 307.3 & \textbf{\textcolor{blue}{6,951.6}} & 3,085.6 & 3,327.3 & 3,415 & 5,822 & 5,821 \\
Name This Game & 2,292.3 & 8,049.0 & 8,207.8 & 15,572.5 & 12,542 & 17,557 & \textbf{\textcolor{blue}{21,890}} \\
Phoenix & 761.4 & 7,242.6 & 8,485.2 & \textbf{\textcolor{blue}{70,324.3}} & 17,490 & 65,767 & 16,585 \\
Pitfall! & -229.4 & \textbf{\textcolor{blue}{6,463.7}} & -286.1 & 0.0 & 0.0 & 0.0 & 0.0 \\
Pong & -20.7 & 14.6 & 19.5 & 20.9 & 20.9 & \textbf{\textcolor{blue}{21.0}} & \textbf{\textcolor{blue}{21.0}} \\
Private Eye & 24.9 & \textbf{\textcolor{blue}{69,571.3}} & 146.7 & 206.0 & 15,095 & 146 & 350 \\
Q*Bert & 163.9 & 13,455.0 & 13,117.3 & 18,760.3 & 23,784 & 26,646 & \textbf{\textcolor{blue}{572,510}} \\
River Raid & 1,338.5 & 17,118.0 & 7,377.6 & \textbf{\textcolor{blue}{20,607.6}} & 17,322 & 9,336 & 17,571 \\
Road Runner & 11.5 & 7,845.0 & 39,544.0 & 62,151.0 & 55,839 & \textbf{\textcolor{blue}{67,780}} & 64,262 \\
Robotank & 2.2 & 11.9 & \textbf{\textcolor{blue}{63.9}} & 27.5 & 52.3 & 61.1 & 59.4 \\
Seaquest & 68.4 & 42,054.7 & 5,860.6 & 931.6 & \textbf{\textcolor{blue}{266,434}} & 2,680 & 8,268 \\
Skiing & -17,098.1 & \textbf{\textcolor{blue}{-4,336.9}} & -13,062.3 & -19,949.9 & -13,901 & -9,163 & -9,324 \\
Solaris & 1,236.3 & \textbf{\textcolor{blue}{12,326.7}} & 3,482.8 & 133.4 & 8,342 & 2,522 & 6,740 \\
Space Invaders & 148.0 & 1,668.7 & 1,692.3 & 15,311.5 & 5,747 & \textbf{\textcolor{blue}{21,039}} & 20,972 \\
Star Gunner & 664.0 & 10,250.0 & 54,282.0 & \textbf{\textcolor{blue}{125,117.0}} & 49,095 & 70,055 & 77,495 \\
Surround & -10.0 & 6.5 & -5.6 & 1.2 & 6.8 & \textbf{\textcolor{blue}{9.7}} & 8.2 \\
Tennis & -23.8 & -8.3 & 12.2 & 0.0 & 23.1 & \textbf{\textcolor{blue}{23.7}} & 23.6 \\
Time Pilot & 3,568.0 & 5,229.2 & 4,870.0 & 7,553.0 & 8,329 & 9,344 & \textbf{\textcolor{blue}{10,345}} \\
Tutankham & 11.4 & 167.6 & 68.1 & 245.9 & 280 & \textbf{\textcolor{blue}{312}} & 297 \\
Up and Down & 533.4 & 11,693.2 & 9,989.9 & 33,879.1 & 15,612 & 53,585 & \textbf{\textcolor{blue}{71,260}} \\
Venture & 0.0 & 1,187.5 & 163.0 & 48.0 & \textbf{\textcolor{blue}{1,520}} & 0.0 & 43.9 \\
Video Pinball & 16,256.9 & 17,667.9 & 196,760.4 & 479,197.0 & \textbf{\textcolor{blue}{949,604}} & 701,779 & 705,662 \\
Wizard Of Wor & 563.5 & 4,756.5 & 2,704.0 & 12,352.0 & 9,300 & \textbf{\textcolor{blue}{26,844}} & 25,061 \\
Yars' Revenge & 3,092.9 & 54,576.9 & 18,098.9 & \textbf{\textcolor{blue}{69,618.1}} & 35,050 & 32,605 & 26,447 \\
Zaxxon & 32.5 & 9,173.3 & 5,363.0 & \textbf{\textcolor{blue}{13,886.0}} & 10,513 & 7,200 & 13,112
\end{tabular}
\caption{Raw scores across all games, starting with 30 no-op actions. Reference values from \citet{wang2016dueling} and \citet{c51}.\label{fig:atari_sota}}
\end{figure*}

\end{document}